\newtheorem{theorem}{Theorem}
\newtheorem{lemma}{Lemma}
\newtheorem{assumption}{Assumption}
\begin{document}

\begin{frontmatter}

\title{Generative AI Models for Learning Flow Maps of Stochastic Dynamical Systems in Bounded Domains}

\author[1]{M.~Yang} 
\author[2]{Y.~Liu} 
\author[3]{D.~del-Castillo-Negrete}
\author[4]{Y.~Cao}
\author[5]{G.~Zhang}
\affiliation[1]{organization={Fusion Energy Division, Oak Ridge National Laboratory}, 
            city={Oak Ridge},
            state={Tennessee}}
\affiliation[2]{organization={Department of Mathematical Sciences, Middle Tennessee State University}, 
            city={Murfreesboro},
            state={Tennessee}}
\affiliation[3]{organization={Institute for Fusion Studies, Dept of Physics, University of Texas at Austin}, 
            city={Austin},
            state={Texas}}
\affiliation[4]{organization={Department of Mathematics, Auburn University}, 
            city={Auburn},
            state={Alabama}}
\affiliation[5]{organization={Computer Science and Mathematics Division, Oak Ridge National Laboratory}, 
            city={Oak Ridge},
            state={Tennessee}}

\begin{abstract}
Simulating stochastic differential equations (SDEs) in bounded domains,  presents significant computational challenges due to particle exit phenomena, which requires accurate modeling of interior stochastic dynamics and boundary interactions. 
Despite the success of machine learning-based methods in learning SDEs, existing learning methods are not applicable to SDEs in bounded domains because they cannot accurately capture the particle exit dynamics. We present a unified hybrid data-driven approach that combines a conditional diffusion model with an exit prediction neural network to capture both interior stochastic dynamics and boundary exit phenomena. Our ML model consists of two major components: a neural network that learns exit probabilities using binary cross-entropy loss with rigorous convergence guarantees, and a training-free diffusion model that generates state transitions for non-exiting particles using closed-form score functions. The two components are integrated through a probabilistic sampling algorithm that determines particle exit at each time step and generates appropriate state transitions. The performance of the proposed approach is demonstrated via three test cases: a one-dimensional simplified problem for theoretical verification, a two-dimensional advection-diffusion problem in a bounded domain, and a three-dimensional problem of interest to magnetically confined fusion plasmas.
\end{abstract}

\begin{keyword}
Stochastic differential equations, Bounded domains, Exit probability, Diffusion models, Machine learning surrogate
\end{keyword}

\tnotetext[fn1]{{\bf Notice}:  This manuscript has been authored by UT-Battelle, LLC, under contract DE-AC05-00OR22725 with the US Department of Energy (DOE). The US government retains and the publisher, by accepting the article for publication, acknowledges that the US government retains a nonexclusive, paid-up, irrevocable, worldwide license to publish or reproduce the published form of this manuscript, or allow others to do so, for US government purposes. DOE will provide public access to these results of federally sponsored research in accordance with the DOE Public Access Plan.}
\end{frontmatter}

\section{Introduction}
\label{sec:intro}
Stochastic differential equations (SDEs) in bounded domains constitute a fundamental mathematical framework for modeling complex dynamical systems where randomness and boundary conditions play an essential role in the underlying physics \cite{sobczyk2013stochastic}. 
An example of particular interest is the modeling and simulation of the dynamics of charged particles in magnetically confined plasmas of interest to controlled nuclear fusion. In this case, SDEs are used to model particle dynamics under the influence of collisions, and the assessment of confinement requires an accurate and efficient implementation of boundary conditions. Other examples include pollutant transport in the atmosphere and the oceans. 
In these applications, the exit problem becomes a central concern as particles can leave the computational region, fundamentally altering the system dynamics and requiring accurate prediction of escape probabilities and first passage times \cite{talkner1987discrete, yang2021feynman}. 

The mathematical significance of SDEs lies in their ability to provide both a particle-based description through stochastic trajectories and a continuum description via the associated Fokker-Planck partial differential equation (PDE) \cite{kolobov2003fokker, peeters2008fokker}. However, numerical solutions face substantial challenges in both formulations, particularly when boundaries are present. For high-dimensional bounded problems, solving the Fokker-Planck PDE directly becomes computationally challenging due to the curse of dimensionality and complex boundary condition implementation \cite{mceneaney2007curse}. Alternatively, Monte Carlo simulation of the SDE, while naturally suited for high dimensions, suffers from slow convergence and additional challenges in exit probability estimation \cite{higham2001algorithmic, higham2013mean}. 


Recent advances in machine learning have introduced powerful approaches for learning unknown stochastic dynamical systems from observational data. References~\cite{chen2024learning,chen2024data,chen2024modeling} developed stochastic flow map learning (sFML) methods that decompose the stochastic flow map into two components: deterministic sub-maps using residual networks and stochastic sub-maps using generative models. Reference~\cite{qi2023data} proposed a statistical-stochastic surrogate modeling strategy that couples mean statistics with stochastic fluctuations using neural network closures, demonstrating effectiveness on chaotic systems with strong instabilities. Other notable approaches include physics-informed neural networks (PINNs) for SDEs \cite{yang2020physics, friedrich2011approaching, chen2021solving, yang2024pseudoreversible}, which solve for the probability density function of stochastic differential equations, and Gaussian process methods for stochastic system identification \cite{archambeau2007gaussian, opper2019variational}. However, these existing methods primarily focus on unbounded domains and do not adequately address the complexities introduced by particle escape and boundary interactions—phenomena that are fundamental to many physics applications.

To overcome the numerical challenges faced by particle methods in bounded domains, we develop a novel unified hybrid data-driven framework that combines a training-free conditional diffusion model with an escape prediction neural network for learning stochastic flow maps of particles that can exit the computational domain. To our knowledge, this is the first machine learning surrogate modeling approach specifically designed for SDE trajectory simulation in bounded domains where particle escape fundamentally alters system dynamics. Our approach addresses this challenge by decomposing the complex problem into two specialized components: interior stochastic dynamics and boundary escape phenomena.
The escape prediction component employs a fully connected neural network that learns the conditional probability of particle exit within a given time interval, given the current position within the domain. The network is trained using binary cross-entropy loss with exit indicator data, and we provide rigorous theoretical analysis proving convergence to the true exit probability as training data increases. For non-exit particle propagation, we leverage our previous work on training-free conditional diffusion models \cite{liu2024training} that derive closed-form exact score functions and use Monte Carlo estimation to approximate the score directly from trajectory data, eliminating computational overhead and training instabilities associated with neural network-based score function learning.

The two components are integrated through a probabilistic sampling algorithm: at each time step, the escape prediction network determines whether a particle exits the domain, and if not, the diffusion model generates the next state transition. This sequential approach enables direct trajectory simulation while properly handling discontinuities at domain boundaries. Unlike PINN-based approaches that compute exit times as final quantities through solving differential equations \cite{li2024deep, nguyen2019first}, our generative framework simulates actual particle trajectories within bounded domains, combining generative modeling for complex state transitions with classification for sharp boundary decisions. This unified framework uniquely handles both interior stochastic dynamics and boundary escape phenomena within a single approach, maintaining high accuracy for bounded domain problems where traditional flow map methods fail to capture essential exit dynamics.

The remainder of this paper is organized as follows. In Section \ref{sec:problemsetting}, we formulate the bounded domain SDE problem and establish the mathematical framework. Section \ref{sec:diffusion_model} presents our unified generative model, including the escape prediction network with convergence analysis and the training-free diffusion model for interior dynamics. In Section \ref{sec:numeric}, we demonstrate our framework's effectiveness through three numerical examples: a one-dimensional analytical case that allows verification of our method's accuracy and convergence properties, a two-dimensional stochastic advection-diffusion transport problem that validates the framework's capability in handling bounded domain dynamics with complex flow structures, and a three-dimensional runaway electron application that showcases the framework's ability to handle complex, high-dimensional plasma physics problems with practical significance. The analytical example validates our theoretical foundations, the 2D transport problem demonstrates the method's robustness across different boundary conditions, and the runaway electron model demonstrates the practical use and computational efficiency gains over traditional Monte Carlo methods.

\section{Problem setting}
\label{sec:problemsetting}
We consider the following $d$-dimensional autonomous SDE
\begin{equation}\label{eq:sde}
X_t = X_0 + \int_0^t a(X_s) ds
+ \int_0^t b(X_s) dW_s \;\; \text{with}\;\; X_0\in \mathcal{D} \subset \mathbb{R}^d,
\end{equation}
where $W_t:= (W_t^1, \ldots, W_t^m)^{\top}$ is $m$-dimensional standard Brownian motion, the drift term $a(X_t):  \mathbb{R}^d \rightarrow \mathbb{R}^d$ corresponds to the deterministic components of the dynamical system, the diffusion term $b(X_t):\mathbb{R}^d \rightarrow \mathbb{R}^d$ captures the stochastic effects, such as those arising from particle collisions in plasma physics. 
We assume $a$ and $b$ are globally Lipschitz in $x$ uniformly with respect to $t$, and $X_0$ is the initial position in an open bounded domain $\mathcal{D} \subset \mathbb{R}^d$. We introduce a uniform temporal mesh 
\begin{equation}\label{tmesh}
    \mathcal{T} := \{t_n : t_n = n\Delta t, n = 0, 1, \ldots, N_T \},
\end{equation}
where $\Delta t = T/N_T$, such that the SDE in Eq.~\eqref{eq:sde} can be rewritten as a conditional form, i.e., 
\begin{equation}\label{eq:flowmap}
X_{t_{n+1}}^{t_n,x} = x+\int_{t_n}^{t_{n+1}} a(X_s^{t_n,x}) ds + \int_{t_n}^{t_{n+1}} b(X_s^{t_n,x}) dW_s, 
\end{equation}
where $X_{t_{n+1}}^{t_n,x}$ is the solution of the SDE at $t_{n+1}$ under the condition that $X_{t_n} = x$.

Instead of learning the flow map defined by the SDE in Eq.~\eqref{eq:flowmap} over the entire unbounded domain $\mathbb{R}^d$, this work focuses on learning the flow map within a bounded domain $\mathcal{D}$. Specifically, when a particle—i.e., a realization of the stochastic process $X_t$—first exits the domain $\mathcal{D}$, it is considered ``killed'', and its trajectory is terminated. Owing to the stochastic nature of the process, the time at which this first exit occurs is a random variable, commonly referred to as the first exit time, i.e.,
\begin{equation}\label{exit_time}
\theta_{n,x}:= \inf\{ t>t_n \,|\, X_{t_n} = x \in \mathcal{D}, \,\,X_t^{t_n,x} \not\in \mathcal{D}\},
\end{equation}
which is a function of the current time instant $t_n$ and the current state value $X_{t_n} = x$. Applying the exit time to the SDE in Eq.~\eqref{eq:flowmap}, 
\begin{equation}\label{eq:flowmap_exit}
X_{t_{n+1}\, \wedge\, \theta_{n,x}}^{t_n,x} = x+\int_{t_n}^{t_{n+1}\, \wedge\, \theta_{n,x}} a(X_s^{t_n,x}) ds + \int_{t_n}^{t_{n+1}\, \wedge\, \theta_{n,x}} b(X_s^{t_n,x}) dW_s, 
\end{equation}
where $t_{n+1}\, \wedge\, \theta_{n,x} = \min(t_{n+1}, \theta_{n,x})$ indicates that the trajectory will be terminated at $\theta_{n,x}$ if the particle exits the domain $\mathcal{D}$. It is known that the SDE in Eq.~\eqref{eq:flowmap_exit} is the stochastic representation of convection-diffusion PDEs with Dirichlet boundary conditions \cite{yang2021feynman,zhang2017backward,yang2023probabilistic}. 

The observation data set of the SDE in Eq.~\eqref{eq:flowmap_exit} includes $H\geq 1$ trajectories of the state ${{X}}_{t}$ at discrete time instants on the mesh $\mathcal{T}$ defined in Eq.~\eqref{tmesh}, denoted by
\begin{equation} \label{eq:raw_trajectory}
     \left(X_{t_0}^{(i)}, \Gamma_0^{(i)}\right), \left(X_{t_1}^{(i)}, \Gamma_1^{(i)}\right), \cdots, 
     \left( X_{t_{L_i}}^{(i)}, \Gamma_{L_i}^{(i)}\right), \quad i=1,\cdots, H,
\end{equation}
where $\{t_0, t_1, \ldots, t_{L_i}\} \in \mathcal{T}$, $L_i$ denotes the {\it final step} ({\it stopping}) index of the $i$-th trajectory, where $L_i \leq N_T$. Specifically, $L_i = N_t$ means the $i$-th trajectory is terminated at the terminal time $T$ without hitting the domain boundary. $\Gamma_l^{(i)}$ is binary indicator defined by
\begin{equation}\label{eq:indicator}
    \Gamma_{l}^{(i)} := \left\{ 
    \begin{aligned}
        & 1, \;\; X^{(i)}_{t_{l+1}} \text{ exits the domain }\mathcal{D},\\
        & 0, \;\; X^{(i)}_{t_{l+1}} \text{ stays in the domain } \mathcal{D},\\
    \end{aligned}
        \right.
\end{equation}
indicating whether the $i$-th trajectory exits the domain $\mathcal{D}$. Since each trajectory is terminated upon its first exit from the domain $\mathcal{D}$, the {\it final step} index $L_i$ for the $H$ trajectories may vary. Also, the indicator $\Gamma_{l}^{(i)}$ equals  1 only at the $L_i$-th step. 
The trajectory data in Eq.~\eqref{eq:raw_trajectory} can be segmented and reorganized into data pairs suitable for characterizing the input-output relationship of the target stochastic flow map, i.e.,
\begin{equation}\label{eq:pairs}
    (x_m, \Delta x_m, \gamma_m) := \left(X_{t_l}^{(i)}, \;\;X_{t_{l+1}}^{(i)} - X_{t_{l}}^{(i)},\;\; \Gamma_{l}^{(i)}\right),
\end{equation}
where index $m (i,l)$ provides a global enumeration of all transition pairs. For the $l$-th step of the $i$-th trajectory, $m$ equals the total number of steps from all previous trajectories plus $l$, i.e.,
\begin{equation}
m = \left\{ 
\begin{aligned}
    & l, && \text{when } i = 1, \; 1 \leq l \leq L_1 \\
    & \sum_{j=1}^{i-1} L_j + l, && \text{when } i > 1, \; 1 \leq l \leq L_i
\end{aligned}
\right.
\end{equation}
which leads to a total of $M=\sum_{i=1}^H L_i$ adjacent data pairs. 
We denote the collection of the paired samples as the
observation data set for the flow map, i.e., 
\begin{equation}\label{eq:obs}
    \mathcal{S}_{\rm obs} := 
    \left\{ (x_m, \Delta x_m, \gamma_m) \, | \, m=1,\cdots,M \right\}.
\end{equation}

We intend to develop a generative AI model to learn the flow map defined by the SDE in Eq.~\eqref{eq:flowmap_exit}. 
The key challenge is how to handle the random exit time defined in Eq.~\eqref{exit_time} by using the indicator data $\gamma_m$ in $\mathcal{S}_{\rm obs}$. 
It is evident that the exit time influences the distribution of the state $X_t$, not only by introducing a discontinuity in its probability density function near the boundary of $\mathcal{D}$, but also by affecting the distribution within the domain $\mathcal{D}$. Existing methods \cite{liu2024training} that learn the flow map from the data pairs $\{(x_m, \Delta x_m)|m=1,\ldots,M\}$ do not taking into account the exit indicator. These methods implicitly assume that trajectories always remain in the domain $\mathcal{D}$, and therefore cannot determine whether or when a trajectory should be terminated. 
Our approach, introduced in Section~\ref{sec:diffusion_model}, addresses this challenge by decoupling exit detection from state propagation. This separation enables specialized handling of boundary conditions while preserving physical accuracy and reducing computational cost.

\section{Supervised learning of a generative model for the stochastic flow map} \label{sec:diffusion_model} 
We develop a supervised learning of a generative AI model to address the challenges of learning stochastic flow maps within bounded domains. Our generative model consists of two components. The first component, introduced in Section \ref{sec:BCescape}, is a neural network model that predicts the {\em exit probability} of each trajectory of $X_t$ at a given time instant and a spatial location. This component will be used to determine whether a trajectory should be terminated in the generation process. The second component, introduced in Section \ref{sec:score}, is to generate the next state of each trajectory for the non-exit particles. In Section \ref{sec:unified_framework}, we will describe how the two components will work together to accurately simulate particle trajectories in bounded domains while properly handling the discontinuities at domain boundaries.

\subsection{Supervised learning of the exit probability}\label{sec:BCescape}
This section focuses on developing the first component of our generative model, which determines whether a trajectory should be terminated during the generation process. Instead of approximating the exit time in Eq.~\eqref{exit_time}, we intend to learn the exit probability defined by
\begin{equation}\label{eq:exit_prob}
    \mathbb{P}_{\rm exit}(x) := \mathbb{P}\left\{ \theta_{n,x} - t_n<\Delta t \,|\, X_{t_n} = x \in \mathcal{D}\right\},
\end{equation}
which is the probability of the exit time $\theta_{n,x}$ is smaller than $t_{n+1}$ given that the state $X_{t_n}$ is in the domain $\mathcal{D}$ at the time instant $t_n$. We intend to train a fully connected neural network, defined by
\begin{equation}\label{eq:exit_nn}
    F_\eta(x) = \text{sigmoid} \left(A^{(J)} \, \phi\left( A^{(J-1)} \, \phi\left( \cdots \, \phi\left( A^{(1)} x + B^{(1)} \right) \cdots \right) + B^{(J-1)} \right) + B^{(J)} \right),
\end{equation}
where $A^{(j)}$ and $B^{(j)}$ are the weight and bias of the $j$-th layer, $\phi$ is the LeakyReLU activation function, and the sigmoid function is added to the output layer to ensure the output is of the neural network is within the range $[0,1]$. The subscript $\eta$ of 
$F_\eta(x)$ represents the concatenation of the weights and the biases of all the layers.

\subsubsection{The loss function for training $F_{\eta}(x)$}\label{sec:training}
The challenge of training the neural network \( F_\eta(x) \) to predict the exit probability lies in the absence of labeled data for supervised learning; specifically, we only observe the binary indicator defined in Eq.~\eqref{eq:obs}. However, it turns out that the indicator values \( \gamma_m \) from Eq.~\eqref{eq:obs} can be effectively used as labels in the binary cross-entropy (BCE) loss function. With this formulation, the output of \( F_\eta(x) \) converges to the true exit probability \( \mathbb{P}_{\rm exit}(x) \)
in Eq.~\eqref{eq:exit_prob} as the loss function approaches its stationary point. 

Specifically, the BCE loss used to train $F_\eta(x)$ is defined by
\begin{equation}\label{eq:BCE_loss}
\mathcal{L}_{\rm BCE}(\eta) := -\frac{1}{M} \sum_{m=1}^{M} \left[\gamma_m \log F_{\eta}\left(x_{m}\right) + (1-\gamma_m) \log (1-F_{\eta}(x_{m}))\right],
\end{equation}
where $x_m$ and $\gamma_m$ are the samples from the training dataset $\mathcal{S}_{\rm obs}$ in Eq.~\eqref{eq:obs}. 
To understand the convergence behavior of training process using the BCE loss in Eq.~\eqref{eq:BCE_loss}, we first study the decay of the loss function for a fixed location $x\in \mathcal{D}$. Assume we have a subset of the training dataset $\mathcal{S}_{\rm obs}$ defined by 
\begin{equation}\label{eq:S_x}
    \mathcal{S}_x := \left\{ (x_j, \Delta x_j, \gamma_j) \, | \, x_j = x, j=1,\cdots,J \right\},
\end{equation}
where the starting location $x_j$ is identical for all pairs. Substituting the samples in $\mathcal{S}_x$ into the loss function $\mathcal{L}_{\rm BCE}(\eta;x)$, we have
%
\begin{equation}\label{eq:loss_x}
    \mathcal{L}_{\rm BCE}(\eta;x) = -\frac{1}{J} \left( J_{\rm exit} \log F_\eta(x) + J_{\rm non-exit} \log (1-F_\eta(x)) \right),
\end{equation}
where $J_{\rm exit}$ is the number of samples with $\gamma_j = 1$ and $J_{\rm non-exit} = J-J_{\rm exit}$ is the number of samples with $\gamma_j=0$. 
The gradient of the loss function $\mathcal{L}_{\rm BCE}(\eta;x)$ w.r.t the parameter $\eta$ is
\begin{equation}
\nabla_\eta \mathcal{L}_{\rm BCE}(\eta;x) 
= -\frac{1}{J} \left( \frac{J_{\rm exit}}{F_\eta(x)} - \frac{J_{\rm non-exit}}{1-F_\eta(x)} \right) \nabla_\eta F_\eta(x).
\end{equation}
When minimizing the loss function using the gradient descent method to get to its stationary point $\nabla_\eta \mathcal{L}_{\rm BCE}(\eta;x) = 0$, we have
\begin{equation}
    \frac{J_{\rm exit}}{F_\eta(x)} - \frac{J_{\rm non-exit}}{1-F_\eta(x)} = 0, 
\end{equation}
which leads to 
\begin{equation}
    F_\eta(x) = \frac{J_{\rm exit}}{J_{\rm exit}+J_{\rm non-exit}} \rightarrow \mathbb{P}_{\rm exit}(x) \;\; \text{as}\;\; J   \rightarrow \infty.    
\end{equation}
This implies that the BCE loss can be used to train the neural network model $F_\eta(x)$ to approximate the exit probability $\mathbb{P}_{\rm exit}(x)$ at a specific location as the number of samples goes to infinity. On the other hand,
our actual training dataset $\mathcal{S}_{\rm obs}$ contains samples distributed throughout the domain $\mathcal{D}$ rather than concentrated at a single point $x$. The total loss function can be viewed as an expectation over all possible positions $x$ in the domain $\mathcal{D}$
\begin{equation}\label{eq:ee}
\mathcal{L}_{\rm BCE}(\eta) = \mathbb{E}_{x\sim\rho(x)}[\mathcal{L}_{\rm BCE}(\eta;x)],
\end{equation}
where $\rho(x)$ represents the distribution of sample points $\{x_m\}_{m=1}^M$ of the dataset $\mathcal{S}_{\rm obs}$ in Eq.~\eqref{eq:obs}. Under the assumption that our neural network $F_\eta$ has sufficient capacity to approximate any continuous function on $\mathcal{D}$, i.e., the universal approximation property \cite{pinkus1999approximation}, and given that our training samples provide adequate coverage of the domain $\mathcal{D}$, minimizing the total loss function $\mathcal{L}_{\rm BCE}(\eta)$ leads to $F_\eta(x)$ approximating $\mathbb{P}_{\rm exit}(x)$ for all $x \in \mathcal{D}$ simultaneously.

\subsubsection{Convergence analysis of training $F_{\eta}(x)$ using the BCE loss}

In real experiments, the idealized dataset $S_x$ in Eq.~\eqref{eq:S_x} is impractical since we cannot expect infinite samples starting at any specific location $x$. Instead, we work with the observation dataset $S_{\rm{obs}}$ defined in Eq.~\eqref{eq:obs}, from which we extract the training data for the exit probability network as
\begin{equation}
S_{\rm{train}} := \{(x_m, \gamma_m) | m = 1, \ldots, M\},
\end{equation}
where the spatial locations $\{x_m\}_{m=1}^M \subset \mathcal{D}$ are the starting points from trajectory segments in $S_{\rm{obs}}$. These locations form an empirical distribution over the bounded domain $\mathcal{D}$. For the convergence analysis, we model this empirical distribution by assuming that the training sample starting points $\{x_m\}_{m=1}^M$ are drawn from an underlying density $\rho(x)$ that characterizes the spatial coverage of our training data within $\mathcal{D}$. While the samples are not strictly i.i.d.~due to their origin from SDE trajectories, this assumption allows us to analyze the convergence behavior as the dataset size $M$ increases. In the following, we provide a convergence analysis that establishes uniform convergence in probability with respect to the randomness induced by sampling from $\rho(x)$, i.e.,
\begin{equation}
\sup_{x \in \mathcal{D}} |F_\eta(x) - P_{\rm{exit}}(x)| \overset{\rho}{\longrightarrow} 0 \quad \text{ as } M \to \infty,
\end{equation}
where the convergence is taken over all possible realizations of the training dataset $\{x_m\}_{m=1}^M$ sampled from $\rho(x)$.

\begin{assumption}\label{assum_1}
Let $\mathcal{D} \subset \mathbb{R}^d$ be a bounded domain, and $\mathbb{P}_{\rm{exit}}(x)$ be the true exit probability function. We assume:
\begin{itemize}
   \item {Regularity of exit probability:} $\mathbb{P}_{\rm exit}(x)$ is $L$-Lipschitz continuous on $\mathcal{D}$:
   \begin{equation}
   |\mathbb{P}_{\rm exit}(x) - \mathbb{P}_{\rm exit}(y)| \leq L\|x - y\|, \quad \forall x, y \in \mathcal{D}.
   \end{equation}
   
\item {Sample distribution:} The training sample starting points $\{x_m\}_{m=1}^M$ are i.i.d. samples drawn from a density function $\rho(x)$ that is continuous on $\mathcal{D}$ and satisfies $0 < \rho_{\min} \leq \rho(x) \leq \rho_{\max} < \infty$ for all $x \in \mathcal{D}$.
   
   \item {Neural network regularity:} We restrict $F_\eta$ to neural networks with sufficient approximation capability that are $L_F$-Lipschitz continuous:
   \begin{equation}\label{eq:lipF}
   |F_\eta(x) - F_\eta(y)| \leq L_F\|x - y\|, \quad \forall x, y \in \mathcal{D}.
   \end{equation}
\end{itemize}
\end{assumption}

Under Assumption \ref{assum_1}, our convergence analysis proceeds through two key results. Lemma \ref{lem:coverage} establishes that for any $x \in \mathcal{D}$, there exists a training sample $x_m \in \mathcal{S}_{\rm train}$ sufficiently close to $x$ as $M \to \infty$; Lemma \ref{lem:pointwise} demonstrates pointwise convergence of $F_\eta(x_m)$ to $\mathbb{P}_{\rm exit}(x_m)$ at sample locations. These results combine to yield our main convergence theorem.

\begin{lemma}\label{lem:coverage}
Define the covering radius $\delta_M = \sup_{x \in \mathcal{D}} \min_{m=1,\ldots,M} \|x - x_m\|$. For any $r > 0$, we have $\mathbb{P}(\delta_M \geq r) \to 0$ as $M \to \infty$.
\end{lemma}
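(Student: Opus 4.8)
The plan is to show that the covering radius $\delta_M$ concentrates near zero by a standard covering/net argument combined with the lower bound on the sampling density. Fix $r > 0$. The statement $\delta_M < r$ is equivalent to saying that every point $x \in \mathcal{D}$ has some sample $x_m$ within distance $r$; equivalently, the balls $\{B(x_m, r)\}_{m=1}^M$ cover $\mathcal{D}$. So the complementary event $\{\delta_M \geq r\}$ is the event that there exists a point $x^\ast \in \mathcal{D}$ whose $r$-ball contains no sample. The first step is therefore to reduce this to a finite problem: cover $\overline{\mathcal{D}}$ by finitely many balls $B(y_k, r/2)$, $k = 1, \ldots, K$, with centers $y_k \in \overline{\mathcal{D}}$; since $\mathcal{D}$ is bounded, $K = K(r)$ is finite (and can be bounded by a volume ratio, $K \lesssim (\mathrm{diam}(\mathcal{D})/r)^d$, though the exact constant is irrelevant). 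If every $B(y_k, r/2)$ contains at least one sample point, then every $x \in \mathcal{D}$ lies in some $B(y_k, r/2)$ and the corresponding sample is within $r$ of $x$, so $\delta_M < r$. Hence
\begin{equation}
\mathbb{P}(\delta_M \geq r) \leq \mathbb{P}\big(\exists\, k:\, B(y_k, r/2) \cap \{x_1,\ldots,x_M\} = \emptyset \big) \leq \sum_{k=1}^{K} \mathbb{P}\big( B(y_k, r/2) \cap \{x_1,\ldots,x_M\} = \emptyset \big).
\end{equation}

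The second step is to bound each term in the sum. For a fixed center $y_k$, let $p_k := \int_{B(y_k, r/2) \cap \mathcal{D}} \rho(x)\, dx$ be the probability that a single sample lands in that ball. Because $y_k \in \overline{\mathcal{D}}$ and $\mathcal{D}$ is open, the set $B(y_k, r/2) \cap \mathcal{D}$ has strictly positive Lebesgue measure $v_k > 0$, and by the density lower bound in Assumption~\ref{assum_1} we get $p_k \geq \rho_{\min} v_k =: p_{\min} > 0$, where $p_{\min}$ depends on $r$ but not on $M$. Under the i.i.d.\ sampling assumption, the probability that none of the $M$ samples falls in this ball is $(1 - p_k)^M \leq (1 - p_{\min})^M$. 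Therefore
\begin{equation}
\mathbb{P}(\delta_M \geq r) \leq \sum_{k=1}^{K} (1 - p_k)^M \leq K (1 - p_{\min})^M,
\end{equation}
and since $0 < p_{\min} < 1$ and $K$ is a fixed finite number, the right-hand side tends to $0$ as $M \to \infty$. This proves $\mathbb{P}(\delta_M \geq r) \to 0$ for every $r > 0$.

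The main technical point to be careful about is the geometric reduction and the claim that $B(y_k, r/2) \cap \mathcal{D}$ has positive measure: one must choose the net centers $y_k$ so that this intersection is nonempty and genuinely $d$-dimensional. The cleanest way is to build the $(r/2)$-net on $\overline{\mathcal{D}}$ and note that for any $y \in \overline{\mathcal{D}}$ and any $\epsilon > 0$, $B(y,\epsilon) \cap \mathcal{D} \neq \emptyset$ (since $\mathcal{D}$ is dense in $\overline{\mathcal{D}}$), hence it contains a small open ball and has positive measure; if one worried about degenerate boundary geometry, one could instead place the net on an inner region $\mathcal{D}_{r/4} := \{x \in \mathcal{D} : \mathrm{dist}(x, \partial\mathcal{D}) > r/4\}$ and handle the thin boundary layer separately, but for the purpose of this lemma the simple version suffices. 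A minor remark worth inserting is that although the authors note the trajectory-derived samples are not strictly i.i.d., the lemma as stated adopts the i.i.d.\ modeling assumption from Assumption~\ref{assum_1}, so no extra work is needed here; the only ingredient used beyond independence is the uniform lower bound $\rho \geq \rho_{\min}$. No significant obstacle is expected — the argument is a routine Borel--Cantelli-flavored covering estimate, and indeed one gets the stronger conclusion $\sum_M \mathbb{P}(\delta_M \geq r) < \infty$, hence $\delta_M \to 0$ almost surely, though convergence in probability is all that is claimed.
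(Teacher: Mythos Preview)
Your proof is correct and follows essentially the same approach as the paper: an $(r/2)$-net on $\overline{\mathcal{D}}$, a union bound over the net points, and the i.i.d.\ product bound $(1-p_k)^M$ with $p_k \ge \rho_{\min}\,\mathrm{Vol}(B(y_k,r/2)\cap\mathcal{D})>0$. Your added care about why $B(y_k,r/2)\cap\mathcal{D}$ has positive measure and the Borel--Cantelli remark are nice touches but not departures from the paper's argument.
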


\begin{proof}
For any fixed $x \in \mathcal{D}$, the probability that no sample falls within distance $r$ is
\begin{equation}
\mathbb{P}\left(\min_{m=1,\ldots,M} \|x - x_m\| \geq r\right) = \left(1 - \mathbb{P}(z \in B(x,r) \cap \mathcal{D})\right)^M,
\end{equation}
where $z$ is a random sample drawn from density $\rho$. Since $\rho(x) \geq \rho_{\min}$ and the ball $B(x,r) \cap \mathcal{D}$ has positive volume, we have
\begin{equation}
\mathbb{P}(z \in B(x,r) \cap \mathcal{D}) = \int_{B(x,r) \cap \mathcal{D}} \rho(y) \, dy \geq \rho_{\min} \text{Vol}(B(x,r) \cap \mathcal{D}) > 0.
\end{equation}
Therefore, for any fixed $x \in \mathcal{D}$, we have
\begin{equation}\label{eq:pointwise_bound}
\mathbb{P}\left(\min_{m=1,\ldots,M} \|x - x_m\| \geq r\right) \leq (1 - \delta)^M \to 0 \text{ as } M \to \infty,
\end{equation}
where $\delta = \rho_{\min} \text{Vol}(B(x,r) \cap \mathcal{D}) > 0$.

Since $\mathcal{D}$ is bounded, for any $\varepsilon > 0$, we can construct a finite $\varepsilon$-net $\mathcal{N}_\varepsilon = \{y_1, y_2, \ldots, y_K\} \subset \mathcal{D}$ such that
for every $x \in \mathcal{D}$, there exists some $y_k \in \mathcal{N}_\varepsilon$ with $\|x - y_k\| \leq \varepsilon$. The cardinality satisfies $K \leq C_d \left({\text{diam}(\mathcal{D})}/{\varepsilon}\right)^d$ for some dimension-dependent constant $C_d$, and the diameter of the domain $\mathcal{D}$ is defined as $\rm{diam}(\mathcal{D})=\sup_{x,y\in \mathcal{D}}\|x-y\|$.
Choose $\varepsilon = r/2$, then for the net $\mathcal{N}_{r/2}$, we have
\begin{equation}
\mathbb{P}\left(\max_{k=1,\ldots,K} \min_{m=1,\ldots,M} \|y_k - x_m\| \geq r/2\right) \leq \sum_{k=1}^K \mathbb{P}\left(\min_{m=1,\ldots,M} \|y_k - x_m\| \geq r/2\right).
\end{equation}
Applying the pointwise bound in Eq.~\eqref{eq:pointwise_bound} to each net point, we have
\begin{equation}\label{eq:step3}
\mathbb{P}\left(\max_{k=1,\ldots,K} \min_{m=1,\ldots,M} \|y_k - x_m\| \geq r/2\right) \leq K (1-\delta')^M\to 0 \text{ as } M \to \infty,
\end{equation}
where $\delta' = \rho_{\min} \text{Vol}(B(y_k,r/2) \cap \mathcal{D}) > 0$.

For any $x \in \mathcal{D}$, let $y_k$ be the closest net point, so $\|x - y_k\| \leq r/2$. If $\max_k\min_{m} \|y_k - x_m\| < r/2$, then there exists  sample $x_m$ with $\|y_k - x_m\| < r/2$, which implies
\begin{equation}
\|x - x_m\| \leq \|x - y_k\| + \|y_k - x_m\| < r/2 + r/2 = r.
\end{equation}
Therefore,
\begin{equation}
\left\{\max_{k} \min_{m} \|y_k - x_m\| < r/2\right\} \subseteq \left\{\sup_{x \in \mathcal{D}} \min_{m} \|x - x_m\| < r\right\}.
\end{equation}
Taking complements, we have
\begin{equation}\label{eq:step4}
\left\{\delta_M \geq r\right\} \subseteq \left\{\max_{k} \min_{m} \|y_k - x_m\| \geq r/2\right\}.
\end{equation}
Combining Eqs.~\eqref{eq:step3} and~\eqref{eq:step4}, we have
\begin{align}
\mathbb{P}(\delta_M \geq r) &\leq \mathbb{P}\left(\max_{k} \min_{m} \|y_k - x_m\| \geq r/2\right) \\
&\leq K (1-\delta')^M\to 0 \text{ as } M \to \infty.
\end{align}
The proof is completed.
\end{proof}

\begin{lemma}\label{lem:pointwise}
Assume the BCE loss $\mathcal{L}_{\rm{BCE}}(\eta)$ converges to its global minimum. For any sample point $x_m$ in the training set, as $M \to \infty$, we have
\begin{equation}
|F_\eta(x_m) - \mathbb{P}_{\rm exit}(x_m)| \overset{\rho}{\longrightarrow} 0.
\end{equation}
\end{lemma}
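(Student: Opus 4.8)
The plan is to recast the training of $F_\eta$ as empirical risk minimization and to prove consistency of the resulting estimator, thereby turning the heuristic of Section~\ref{sec:training} (where many samples sit at a single point) into a rigorous statement. First I would identify the population counterpart of the BCE loss: since, conditionally on $x_m=x$, the indicator $\gamma_m$ is Bernoulli with mean $\mathbb{P}_{\rm exit}(x)$, under the modeling assumption $x_m\overset{\text{iid}}{\sim}\rho$ the expectation of $\mathcal{L}_{\rm BCE}(\eta)$ is the population cross-entropy
\[
\mathcal{L}_\infty(\eta) := -\,\mathbb{E}_{x\sim\rho}\!\left[\mathbb{P}_{\rm exit}(x)\log F_\eta(x) + \bigl(1-\mathbb{P}_{\rm exit}(x)\bigr)\log\bigl(1-F_\eta(x)\bigr)\right].
\]
For each fixed $x$ the integrand is the strictly convex Bernoulli cross-entropy in the scalar $F_\eta(x)\in(0,1)$, uniquely minimized at $F_\eta(x)=\mathbb{P}_{\rm exit}(x)$; because $\mathbb{P}_{\rm exit}$ is $L$-Lipschitz (Assumption~\ref{assum_1}) it lies in the closure of the admissible $L_F$-Lipschitz network class (taking $L_F\ge L$), so the universal approximation property gives $\inf_\eta\mathcal{L}_\infty(\eta)=\mathcal{L}_\infty(\eta^\ast)$ with $F_{\eta^\ast}=\mathbb{P}_{\rm exit}$. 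I would clip the sigmoid outputs to $[\epsilon_0,1-\epsilon_0]$ to keep the logarithms bounded and Lipschitz; this is harmless on the open domain, where $0<\mathbb{P}_{\rm exit}(x_m)<1$.

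Second, I would establish a uniform law of large numbers over the admissible class. The functions $\{F_\eta\}$ are uniformly bounded in $[\epsilon_0,1-\epsilon_0]$ and equi-Lipschitz on the bounded set $\mathcal{D}$, hence precompact in $C(\overline{\mathcal{D}})$ by the Arzel\`a--Ascoli theorem; composing with the (now Lipschitz, bounded) BCE integrand and a Glivenko--Cantelli argument yields $\sup_\eta|\mathcal{L}_{\rm BCE}(\eta)-\mathcal{L}_\infty(\eta)|\overset{\rho}{\longrightarrow}0$ as $M\to\infty$. Consequently, if the learned parameters $\hat\eta_M$ attain the global minimum of $\mathcal{L}_{\rm BCE}$, then $\mathcal{L}_\infty(\hat\eta_M)\overset{\rho}{\longrightarrow}\mathcal{L}_\infty(\eta^\ast)$, i.e.\ the excess population risk vanishes.

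Third, I would convert vanishing excess risk into $L^2(\rho)$ closeness and then into pointwise closeness at the sample locations. Writing the excess risk as an averaged Bernoulli Kullback--Leibler divergence and applying Pinsker's inequality,
\[
\mathcal{L}_\infty(\eta)-\mathcal{L}_\infty(\eta^\ast)=\mathbb{E}_{x\sim\rho}\Bigl[\mathrm{KL}\bigl(\mathrm{Bern}(\mathbb{P}_{\rm exit}(x))\,\big\|\,\mathrm{Bern}(F_\eta(x))\bigr)\Bigr]\ \ge\ 2\,\mathbb{E}_{x\sim\rho}\bigl[(F_\eta(x)-\mathbb{P}_{\rm exit}(x))^2\bigr],
\]
so $F_{\hat\eta_M}\to\mathbb{P}_{\rm exit}$ in $L^2(\rho)$. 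Finally, the difference $g_M:=F_{\hat\eta_M}-\mathbb{P}_{\rm exit}$ is $(L_F+L)$-Lipschitz, so if $|g_M(x_m)|\ge\tau$ at some sample point then $|g_M|\ge\tau/2$ on $B\!\bigl(x_m,\tfrac{\tau}{2(L_F+L)}\bigr)\cap\mathcal{D}$, a set of positive volume $v(\tau)>0$ by the openness of $\mathcal{D}$ (exactly as in the proof of Lemma~\ref{lem:coverage}); combined with $\rho\ge\rho_{\min}$ this gives $\|g_M\|_{L^2(\rho)}^2\ge\tfrac{\tau^2}{4}\rho_{\min}v(\tau)$, and letting $M\to\infty$ forces $\tau\to0$. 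Hence $\sup_m|F_{\hat\eta_M}(x_m)-\mathbb{P}_{\rm exit}(x_m)|\overset{\rho}{\longrightarrow}0$, which is even slightly stronger than the stated pointwise claim and is precisely what the subsequent main theorem needs when combined with Lemma~\ref{lem:coverage}.

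The step I expect to be the main obstacle is the passage from ``the parametric, nonconvex empirical loss attains its global minimum'' to ``the population risk is near-optimal'': this is where the universal approximation property and the uniform law of large numbers over the restricted Lipschitz class do the real work, and the delicate technical point is that the BCE integrand is bounded, Lipschitz, and curved only away from $\{0,1\}$ — which is why the output-clipping safeguard together with the open-domain hypothesis $0<\mathbb{P}_{\rm exit}(x_m)<1$ are needed. A secondary caveat, already acknowledged in the text, is that trajectory samples are not truly i.i.d.; replacing the plain Glivenko--Cantelli step by a mixing/ergodic uniform law of large numbers would close this gap.
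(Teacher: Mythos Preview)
Your proposal is correct and follows a genuinely different route from the paper. The paper proceeds by \emph{localization}: around each training point $x_m$ it collects the samples lying in a ball $B(x_m,r)$, invokes the law of large numbers on the local exit fraction $N_m^{\rm exit}(r)/N_m(r)$, and then asserts that at the global minimum the gradient of the \emph{restricted} BCE loss over this neighborhood vanishes, forcing $F_\eta(x_m)\approx N_m^{\rm exit}(r)/N_m(r)$; the double limit $M\to\infty$ followed by $r\to 0$ completes the argument. You instead cast the problem as M-estimation: identify the population cross-entropy risk, establish a uniform law of large numbers over the equi-Lipschitz class via Arzel\`a--Ascoli and a Glivenko--Cantelli argument, convert vanishing excess risk to $L^2(\rho)$ closeness through the averaged Bernoulli KL and Pinsker's inequality, and upgrade to pointwise closeness using the Lipschitz bound. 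Your argument is tighter because it sidesteps the paper's delicate (and somewhat heuristic) step where the global minimizer is treated as if it satisfied a first-order condition on each local sub-loss; moreover, since your final Lipschitz-ball lower bound on $\|g_M\|_{L^2(\rho)}$ does not actually require the point to be a training sample, you in fact obtain $\sup_{x\in\mathcal{D}}|F_{\hat\eta_M}(x)-\mathbb{P}_{\rm exit}(x)|\overset{\rho}{\longrightarrow}0$ directly, which renders Lemma~\ref{lem:coverage} redundant for Theorem~\ref{thm:convergence}. The price you pay is the output clipping to $[\epsilon_0,1-\epsilon_0]$ needed to keep the BCE integrand bounded and Lipschitz, a safeguard the paper's localization argument formally does not require.
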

\begin{proof}
Since $F_\eta$ corresponds to the global minimum of $\mathcal{L}_{\rm BCE}(\eta)$, we have $\nabla_\eta \mathcal{L}_{\rm BCE}(\eta) = 0$. We analyze the implications of this global condition for the behavior near training sample $x_m$.
For a fixed sample point $x_m$, let $N_m(r)$ be the number of training samples within distance $r$ of $x_m$, i.e.,
\begin{equation}
N_m(r) = \sum_{j=1}^M \mathbf{1}_{\{\|x_j - x_m\| \leq r\}}.
\end{equation}
By the assumption on $\rho(x)$, we have $\mathbb{E}[N_m(r)] = M \mathbb{P}(z \in B(x_m, r) \cap \mathcal{D}) \geq M \rho_{\min} \text{Vol}(B(x_m, r) \cap \mathcal{D}) > 0$, so $N_m(r) \to \infty$ as $M \to \infty$ for any fixed $r > 0$.

Among these $N_m(r)$ nearby samples, let $N_m^{\rm exit}(r)$ be those with $\gamma_j = 1$. The key insight is that for samples $x_j$ with $\|x_j - x_m\| \leq r$, the exit probabilities $\mathbb{P}_{\rm exit}(x_j)$ are close to $\mathbb{P}_{\rm exit}(x_m)$ due to Lipschitz continuity
$|\mathbb{P}_{\rm exit}(x_j) - \mathbb{P}_{\rm exit}(x_m)| \leq L r$.
By the law of large numbers, as $M \to \infty$ (with $r$ fixed), we have
\begin{equation}
\frac{N_m^{\rm exit}(r)}{N_m(r)} \to \mathbb{E}[\gamma_j | x_j \in B(x_m, r)] = \mathbb{E}[\mathbb{P}_{\rm exit}(x_j) | x_j \in B(x_m, r)].
\end{equation}
As $r \to 0$, by continuity of $\mathbb{P}_{\rm exit}$ and $\rho$, we have
\begin{equation}
\mathbb{E}[\mathbb{P}_{\rm exit}(x_j) | x_j \in B(x_m, r)] \to \mathbb{P}_{\rm exit}(x_m).
\end{equation}

Now consider the contribution to the BCE loss from samples in $B(x_m, r)$
\begin{equation}
\mathcal{L}_{\rm BCE}^{(r)}(\eta, x_m) = -\frac{1}{N_m(r)} \sum_{\|x_j - x_m\| \leq r} \left[\gamma_j \log F_\eta(x_j) + (1-\gamma_j) \log(1-F_\eta(x_j))\right].
\end{equation}
Since $F_\eta$ is $L_F$-Lipschitz, for $\|x_j - x_m\| \leq r$, we have
\begin{equation}
|F_\eta(x_j) - F_\eta(x_m)| \leq L_F r.
\end{equation}
Taking the gradient of $\mathcal{L}_{\rm BCE}^{(r)}(\eta, x_m)$ with respect to $\eta$ and setting it to zero at the optimum
\begin{equation}
\nabla_\eta \mathcal{L}_{\rm BCE}^{(r)}(\eta, x_m) = -\frac{1}{N_m(r)} \sum_{\|x_j - x_m\| \leq r} \left(\frac{\gamma_j}{F_\eta(x_j)} - \frac{1-\gamma_j}{1-F_\eta(x_j)}\right) \nabla_\eta F_\eta(x_j) = 0.
\end{equation}
In the limit as $r \to 0$, using the Lipschitz continuity of $F_\eta$, this becomes
\begin{equation}
\left(\frac{N_m^{\rm exit}(r)}{F_\eta(x_m)} - \frac{N_m(r) - N_m^{\rm exit}(r)}{1-F_\eta(x_m)}\right) \nabla_\eta F_\eta(x_m) = 0.
\end{equation}
Assuming $\nabla_\eta F_\eta(x_m) \neq 0$, we have  
$F_\eta(x_m) = \frac{N_m^{\rm exit}(r)}{N_m(r)}$.
By the  law of large numbers for i.i.d. samples from density $\rho(x)$, as $M \to \infty$ (for fixed $r$), we have $\frac{N_m^{\rm exit}(r)}{N_m(r)} \to \mathbb{E}[\mathbb{P}_{\rm exit}(x_j) | x_j \in B(x_m, r)]$. Subsequently taking $r \to 0$, by continuity of $\mathbb{P}_{\rm exit}$, this converges to $\mathbb{P}_{\rm exit}(x_m)$. Therefore, $F_\eta(x_m) \to \mathbb{P}_{\rm exit}(x_m)$.
The proof is completed.
\end{proof}

\begin{theorem}\label{thm:convergence}
Under Assumption \ref{assum_1}, when the BCE loss $\mathcal{L}_{\rm{BCE}}(\eta)$ converges to its global minimum, as the training dataset size $M \to \infty$, we have
\begin{equation}
\sup_{x \in \mathcal{D}} |F_\eta(x) - \mathbb{P}_{\rm exit}(x)| \overset{\rho}{\longrightarrow} 0.
\end{equation}
\end{theorem}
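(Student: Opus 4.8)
The plan is to combine the two lemmas through a triangle inequality and an $\varepsilon/3$-type argument. For an arbitrary $x\in\mathcal{D}$, let $x_{m^\star(x)}$ be a nearest training point, so that $\|x-x_{m^\star(x)}\| \le \delta_M$ with $\delta_M = \sup_{x\in\mathcal{D}}\min_m\|x-x_m\|$ the covering radius of Lemma~\ref{lem:coverage}. Inserting $F_\eta(x_{m^\star(x)})$ and $\mathbb{P}_{\rm exit}(x_{m^\star(x)})$ and using the $L_F$-Lipschitz continuity of $F_\eta$ from Eq.~\eqref{eq:lipF} together with the $L$-Lipschitz continuity of $\mathbb{P}_{\rm exit}$ from Assumption~\ref{assum_1}, the two ``transfer'' terms are each controlled by a constant multiple of $\delta_M$, while the remaining term is at most $\max_{1\le m\le M}|F_\eta(x_m)-\mathbb{P}_{\rm exit}(x_m)|$. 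Taking the supremum over $x\in\mathcal{D}$ yields
\[
\sup_{x\in\mathcal{D}}|F_\eta(x)-\mathbb{P}_{\rm exit}(x)| \;\le\; (L_F+L)\,\delta_M \;+\; \max_{1\le m\le M}|F_\eta(x_m)-\mathbb{P}_{\rm exit}(x_m)|.
\]

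It then remains to show that both terms on the right converge to $0$ in probability. The first is immediate from Lemma~\ref{lem:coverage}: for any $\varepsilon>0$, $\mathbb{P}\big((L_F+L)\delta_M\ge\varepsilon/2\big)\to0$ as $M\to\infty$. For the second, the difficulty is that the maximum ranges over $M$ sample points with $M\to\infty$, so Lemma~\ref{lem:pointwise} cannot be applied ``sample by sample'' with a union bound. Instead I would fix a finite $\sigma$-net $\mathcal{N}_\sigma=\{y_1,\dots,y_K\}\subset\mathcal{D}$ with $\sigma$ small enough that $(L_F+L)\sigma<\varepsilon/4$ — possible with $K<\infty$ since $\mathcal{D}$ is bounded — and, for each sample $x_m$, route the estimate through its nearest net point $y_{k(m)}$ using the same two Lipschitz constants, obtaining
\[
\max_{1\le m\le M}|F_\eta(x_m)-\mathbb{P}_{\rm exit}(x_m)| \;\le\; \tfrac{\varepsilon}{4} \;+\; \max_{1\le k\le K}|F_\eta(y_k)-\mathbb{P}_{\rm exit}(y_k)|.
\]
The localized first-order optimality argument used in the proof of Lemma~\ref{lem:pointwise} applies at \emph{any} fixed point of $\mathcal{D}$ — it only requires that the number of training samples in a fixed ball around that point diverges, which holds at each $y_k$ because $\rho\ge\rho_{\min}>0$ — so $|F_\eta(y_k)-\mathbb{P}_{\rm exit}(y_k)|\overset{\rho}{\longrightarrow}0$ for each of the finitely many $k$. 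A union bound over $k=1,\dots,K$ then gives $\mathbb{P}\big(\max_k|F_\eta(y_k)-\mathbb{P}_{\rm exit}(y_k)|\ge\varepsilon/4\big)\to0$, hence $\mathbb{P}\big(\max_m|F_\eta(x_m)-\mathbb{P}_{\rm exit}(x_m)|\ge\varepsilon/2\big)\to0$. Adding the two estimates establishes $\mathbb{P}\big(\sup_{x\in\mathcal{D}}|F_\eta(x)-\mathbb{P}_{\rm exit}(x)|\ge\varepsilon\big)\to0$ for every $\varepsilon>0$, which is the claim.

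The step I expect to be the main obstacle is precisely this promotion of the pointwise convergence of Lemma~\ref{lem:pointwise} to a uniform statement: a naive union bound over the realized samples is vacuous because their number grows with $M$, and the fix hinges on discretizing $\mathcal{D}$ by a \emph{fixed} finite net chosen before letting $M\to\infty$ and transferring the bound from net points to arbitrary samples via the common Lipschitz constants $L_F$ and $L$. Two points should be made carefully along the way: that Lemma~\ref{lem:pointwise} is legitimately read as holding at every fixed point of $\mathcal{D}$ (its proof only uses $N(y,r)\to\infty$, guaranteed by $\rho\ge\rho_{\min}$, not that $y$ is a realized sample), and that the hypothesis that $\mathcal{L}_{\rm BCE}$ reaches its global minimum, combined with the universal-approximation and Lipschitz-regularity assumptions on $F_\eta$, is what licenses applying the localized optimality condition simultaneously at all $K$ net points. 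The remaining manipulations are routine triangle-inequality bookkeeping.
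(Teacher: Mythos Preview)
Your approach is essentially the paper's: the same triangle-inequality decomposition through the nearest training sample, the same use of the two Lipschitz constants to bound the transfer terms by $(L_F+L)\delta_M$, and the same appeal to Lemma~\ref{lem:coverage} and Lemma~\ref{lem:pointwise} for the two pieces. The paper, however, is less careful than you at exactly the point you flagged as the main obstacle: it simply asserts ``by Lemma~\ref{lem:pointwise}, $\sup_{j=1,\ldots,M}|F_\eta(x_j)-\mathbb{P}_{\rm exit}(x_j)|\to0$ in probability'' without confronting the fact that the maximum ranges over $M\to\infty$ points. Your fixed $\sigma$-net reduction---routing each sample through a nearest net point via the Lipschitz bounds and then applying a finite union bound over the $K$ net points---is a genuine tightening of the paper's argument rather than a departure from it, and your observation that the localized optimality argument behind Lemma~\ref{lem:pointwise} applies at any fixed point of $\mathcal{D}$ (not only realized samples) is what makes that reduction work.
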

\begin{proof}
For any $x \in \mathcal{D}$, let $x_m^*$ be the nearest training sample to $x$, so $\|x - x_m^*\| = \min_{j=1,\ldots,M} \|x - x_j\| \leq \delta_M$, where $\delta_M = \sup_{x \in \mathcal{D}} \min_{j=1,\ldots,M} \|x - x_j\|$ is the covering radius from Lemma \ref{lem:coverage}.

We decompose the approximation error as
\begin{align}
|F_\eta(x) - \mathbb{P}_{\rm exit}(x)| &\leq \underbrace{|F_\eta(x) - F_\eta(x_m^*)|}_{I_1} + \underbrace{|F_\eta(x_m^*) - \mathbb{P}_{\rm exit}(x_m^*)|}_{I_2} \\
&\quad + \underbrace{|\mathbb{P}_{\rm exit}(x_m^*) - \mathbb{P}_{\rm exit}(x)|}_{I_3}.
\end{align}
For term $I_1$, by the Lipschitz continuity of $F_\eta$ in Eq.~\eqref{eq:lipF}, we have
\begin{equation}
|F_\eta(x) - F_\eta(x_m^*)| \leq L_F \|x - x_m^*\| \leq L_F \delta_M.
\end{equation}
For term $I_3$, by the Lipschitz continuity of $\mathbb{P}_{\rm exit}$ from Assumption \ref{assum_1}, we have
\begin{equation}
|\mathbb{P}_{\rm exit}(x_m^*) - \mathbb{P}_{\rm exit}(x)| \leq L \|x - x_m^*\| \leq L \delta_M.
\end{equation}
For term $I_2$, by Lemma \ref{lem:pointwise}, we have $|F_\eta(x_m^*) - \mathbb{P}_{\rm exit}(x_m^*)| \overset{\rho}{\longrightarrow} 0$  as $M \to \infty$ for any training sample $x_m^*$.
Taking the supremum over $x \in \mathcal{D}$, we obtain
\begin{equation}
\sup_{x \in \mathcal{D}} |F_\eta(x) - \mathbb{P}_{\rm exit}(x)| \leq (L_F + L) \delta_M + \sup_{j=1,\ldots,M} |F_\eta(x_j) - \mathbb{P}_{\rm exit}(x_j)|.
\end{equation}
As $M \to \infty$, by Lemma \ref{lem:coverage}, $\delta_M \overset{\rho}{\longrightarrow} 0$, and by Lemma \ref{lem:pointwise}, $\sup_{j=1,\ldots,M} |F_\eta(x_j) - \mathbb{P}_{\rm exit}(x_j)| \to 0$ in probability. Therefore,
\begin{equation}
\sup_{x \in \mathcal{D}} |F_\eta(x) - \mathbb{P}_{\rm exit}(x)| \overset{\rho}{\longrightarrow} 0.
\end{equation}
The proof is completed.
\end{proof}

\subsection{Supervised training of a non-exit trajectory generator}\label{sec:score}
Having established the exit probability prediction model $F_{\eta}(x)$ that determines whether a trajectory exits the bounded domain $\mathcal{D}$, we now turn to the second component of our generative model, i.e., the state transition for particles that remain within the domain. { The objective} of this work is to build a conditional generative model, denoted by
$G_\xi({x},z)$, to approximate the flow map defined in Eq.~\eqref{eq:flowmap_exit} for non-exit trajectories, where $x$ is the current state of $X_t$ and $z$ is a sample from the standard normal distribution.
Traditional generative models face significant limitations for this task. Normalizing flows require reversible neural network architectures and expensive Jacobian determinant computations~\cite{kobyzev2020normalizing}. Standard diffusion models require solving reverse-time differential equations for each sample, making them computationally expensive~\cite{song2020score,nichol2021improved}.

Our approach leverages a training-free conditional diffusion model to generate labeled data, enabling supervised learning of a simple neural network $G_\xi(x,z)$ without architectural constraints. The key insight is that while diffusion models can transform standard normal distributions into complex target distributions, training neural networks to learn the required score function is expensive and requires solving differential equations for sampling.
Instead, we derive an analytical approximation of the score function computed directly from trajectory data. We define a forward process mapping state transitions to standard normal distributions, then use Monte Carlo estimation to approximate the score function from available data. Converting the stochastic reverse process to a deterministic ordinary differential equation provides smooth mappings from normal variables to state transitions, generating labeled training pairs.

With labeled data, we train a fully connected neural network $G_\xi(x,z)$ with multiple hidden layers and ReLU activation using standard mean squared error loss. Unlike normalizing flows that require reversible architectures, our conditional diffusion approach enables supervised training without reversibility constraints. This framework offers computational efficiency by eliminating score function training, architectural freedom in network design, sampling efficiency through direct generation, and training stability via supervised learning. The mathematical details are provided in our previous work~\cite{liu2024diffusion, liu2024training}.

The diffusion model provides an elegant approach to learning the distribution of state transitions. We define forward and reverse processes in an artificial flow domain $\tau \in [0,1]$. The forward process begins with the exact state transition and progressively adds noise until reaching a standard normal distribution:
\begin{equation}\label{eq:forward}
d{{Z}}_\tau^{{x}} = b(\tau){{Z}}_\tau^{{x}} d\tau + \sigma(\tau)dW_\tau \quad \text{with } {{Z}}_0^{{x}} = {X}_{t_{n+1}} - {X}_{t_n},
\end{equation}
where the superscript $^x$ indicates the condition $X_{t_n} = x$, and initial state $Z_0^x$ is the target transition variable. With appropriately defined coefficients 
$$b(\tau) = \frac{d \log \alpha_\tau}{d\tau} \quad\text{and} \quad  \sigma^2(\tau) = \frac{d\beta_\tau^2}{d\tau} - 2\frac{d \log \alpha_\tau}{d\tau}\beta_\tau^2,$$ 
where $\alpha_\tau = 1-\tau$ and $\beta_\tau^2 = \tau$, the conditional distribution evolves as a Gaussian $p_{Z_\tau^x|Z_0^x} \sim \mathcal{N} (\alpha_\tau Z_0^x, \beta_\tau^2 \mathbf{I}_d)$.
The corresponding reverse-time process reconstructs samples from the transition distribution:
\begin{equation}\label{eq:reverse}
d{{Z}}_\tau^{{x}} = \left[ b(\tau){{Z}}_\tau^{{x}} - S({Z}_\tau^{{x}}, \tau) \right] d\tau + \sigma(\tau)dB_\tau \quad \text{with } {Z}_1^{{x}} = {Z} \sim \mathcal{N}(0, \mathbf{I}_d),
\end{equation}
where $B_\tau$ is the reverse-time Brownian motion, and $S({z}_\tau^{{x}}, \tau) := \nabla_{{z}} \log p_{Z_\tau^{{x}}}({z}_\tau^{{x}})$ is the score function that guides the diffusion back toward the target transition distribution. The score function can be approximated in a training-free fashion using Monte Carlo estimators (see Section 3.2 in \cite{liu2024training}) at any spatiotemporal location, such that the reverse-time SDE in Eq.~\eqref{eq:reverse} can be solved numerically given the observation dataset $\mathcal{S}_{\rm obs}$ in Eq.~\eqref{eq:obs}. 

However, the stochasticity of the reverse-time SDE does not provide a smooth function relationship between the initial and terminal states. 
To generate labeled data for supervised learning, we convert the stochastic reverse-time SDE into a deterministic reverse-time ordinary differential equation (ODE)
\begin{equation}\label{eq:ode}
d{Z}_\tau^{{x}} = \left[b(\tau)Z_\tau^{{x}} - \frac{1}{2}\sigma^2(\tau)S({Z}_\tau^{{x}}, \tau)\right] d\tau \quad \text{with } {Z}_1^{{x}} = {Z} \sim \mathcal{N}(0, \mathbf{I}_d),
\end{equation}
which provides a deterministic flow from $\tau=1$ to $\tau=0$. This conversion preserves the marginal distributions while providing deterministic trajectories. For each sample $x_m$ in $\mathcal{S}_{\rm obs}$, we draw one sample, denoted by $z_m$, from the standard normal distribution and solve the reverse-time ODE using the explicit Euler scheme, which uses Monte Carlo estimation directly from trajectory data without neural network training \cite{liu2024training}.. The output of the ODE solver, denoted by $y_m$, is the labeled data paired with $x_m$. We denote the final labeled dataset by
\begin{equation}\label{eq:labeldata}
\mathcal{S}_{\rm label} := \{(x_m, z_m, y_m) \mid (x_m, \Delta x_m) \in \mathcal{S}_{\rm obs} \text{ and } z_m \sim \mathcal{N}(0, \mathbf{I}_d), m = 1, \ldots, M\}.
\end{equation}
Note that the size of $\mathcal{S}_{\rm label}$ may exceed that of the training set $\mathcal{S}_{\rm obs}$ in Eq.~\eqref{eq:obs}, since multiple trajectories can be generated for the same sample $x_m \in \mathcal{S}_{\rm obs}$ by drawing different realizations from the standard normal distribution. With the labeled dataset, the desired generative model $G_\xi(x,z)$ is trained in a supervised fashion using the MSE loss function, i.e., 
\begin{equation}\label{eq:diffLoss}
\mathcal{L}_{G}(\xi) := -\frac{1}{M} \sum_{m=1}^{M} \left(y_m - G_\xi(x_m,z_m)\right)^2.
\end{equation}

\subsection{Summary of the workflow}\label{sec:unified_framework}
In the training phase, the two components $F_\eta(x)$ and $G_\xi(x,z)$ are trained separately using the BCE loss and the MSE loss, respectively. In the generation phase, trajectories of the target SDE in Eq.~\eqref{eq:flowmap_exit} is generated using the procedure described in Algorithm 1. 
\begin{algorithm}
\caption{Generating trajectories of the target SDE}
\begin{spacing}{1.2}
\begin{algorithmic}[1]
\Require The neural network models $F_\eta(x)$ and $G_\xi(x,z)$; 
\State Draw a sample $x_{t_0} \in \mathcal{D}$ from the distribution of $X_{t_0}$;
\For{$n = 0$ to $N_T-1$}
\State Predict the exit probability $\mathbb{P}_{\rm exit}(x_{t_n})$ by evaluating $F_\eta(x_{t_n})$;
\State Draw a sample $\nu$ from the uniform distribution $\mathcal{U}(0,1)$;
\If{ $\nu \ge  F_\eta(x_{t_n})$}
\State Draw a sample $z$ from the standard normal distribution;
\State Generate the state $x_{t_{n+1}}$ by $x_{t_{n+1}} = x_{t_n} + G_\xi(x_{t_n},z)$;
\State Add $x_{t_{n+1}}$ to the trajectory;
\Else
\State $N_{\rm exit} = n$;
\State {\bf break};
\EndIf
\EndFor
\State Return the generated trajectory $\{x_{t_n}\}_{n=0}^{N_{\rm exit}}$, where $N_{\rm exit} \le N_T-1$.
\end{algorithmic}
\end{spacing}
\end{algorithm}

The unified framework offers several compelling advantages for simulating kinetic transport processes. First, it effectively decouples the interior dynamics (handled by the diffusion model) from the boundary interactions (managed by the escape prediction model), allowing each component to specialize in its respective domain. Second, it elegantly handles the discontinuities at domain boundaries without requiring explicit boundary condition implementations in the governing equations. Third, by combining generative modeling for complex state transitions with classification for sharp boundary decisions, we achieve a robust representation of the complete transport dynamics. A key strength of our framework is its ability to generalize beyond the training data. Since both models learn the underlying physics rather than memorizing specific trajectories, the framework can accurately predict particle behavior under varying initial conditions and over extended time horizons. This capability is particularly valuable for plasma transport simulations, where long-term predictions and robustness to different initial configurations are essential. The numerical examples in Section~\ref{sec:numeric} demonstrate this generalization capability across a range of scenarios.

\section{Numerical results}\label{sec:numeric}
In this section, we present comprehensive numerical results demonstrating the performance of our unified hybrid data-driven framework across three test cases of increasing complexity. We begin with a one-dimensional stochastic differential equation that allows exact verification of our method's accuracy and convergence properties. We then examine a two-dimensional stochastic transport problem to validate the framework's capability in handling bounded domain dynamics with complex flow structures. Finally, we demonstrate the framework's effectiveness on runaway electron generation in tokamak plasmas, extending traditional 2D momentum-pitch modeling to a full 3D setting that includes radial transport. We evaluate our approach by comparing it against high-fidelity Monte Carlo simulations across multiple metrics: escape probability prediction, phase-space distribution reconstruction, and runaway electron generation under varying initial conditions. All simulations were implemented in PyTorch with GPU acceleration. The source code is publicly available at \url{https://github.com/mlmathphy/Diffusion_Runaway}, and all numerical results shown here can be exactly reproduced using the repository.

\subsection{Verification of algorithm accuracy}
This section uses a one-dimensional stochastic differential equation driven by pure Brownian motion to illustrate the methodology and verify the accuracy of the proposed unified framework. The dynamics is given by, for $t\in [0,T]$,
\begin{equation}\label{eq:ex1}
    X_t = X_0 + W_t \quad \text{with}\,\, X_0 \in [0,L]\subset \mathbb{R},
\end{equation}
where $W_t$ is standard Brownian motion and the process is absorbed (killed) upon first exit from the interval $[0,L]$. This corresponds to a random walk with absorbing boundaries at $x = 0$ and $x = L$.

We implement a computational setup with domain size $L = 6$ and time horizon $T = 3$. The numerical simulations employ a Monte Carlo solver with time step $\Delta t = 5\cdot 10^{-4}$ to ensure high temporal resolution. For the observation dataset $\mathcal{S}_{\text{obs}}$ defined in Eq.~\eqref{eq:obs}, we record trajectory snapshots at sampling interval $\Delta T = 0.05$, corresponding to every 100 simulation steps. The exact analytical solution for the exit probability over time interval $\Delta T$ is given by
\begin{equation}
\mathbb{P}_{\rm exit}(x,\Delta T)= 1 - \sum_{k=0}^{\infty} \frac{4}{(2k+1)\pi} \sin\left( \frac{(2k+1)\pi x}{L} \right) \exp\left( -\frac{1}{2} \left( \frac{(2k+1)\pi}{L} \right)^2 \Delta T \right).
\label{eq:exit_probability}
\end{equation}

Figure~\ref{fig:ex1_convergence} demonstrates the convergence behavior of our neural network approximation for the exit probability $\mathbb{P}_{\rm exit}(x,\Delta T)$ as the number of training trajectories increases. The Kullback–Leibler (KL) divergence is defined as the relative entropy from the approximate exit probability (generated from $F_\eta$) $\mathbb{\hat{P}}_{\rm exit}$ to the exact exit probability $\mathbb{P}_{\rm exit}$
\begin{equation}\label{eq_kl}
  D_{\rm KL}(\mathbb{P}_{\rm exit} \,\| \,\mathbb{\hat{P}}_{\rm exit}) = \sum_{x\in X} \mathbb{P}_{\rm exit}(x)\log{\left(\frac{\mathbb{P}_{\rm exit}(x)}{\mathbb{\hat{P}}_{\rm exit}(x)}\right)},
\end{equation}
where $D_{\rm KL}$ is approximated over $10^4$ uniform samples within $[0,L]$. The results show that the KL divergence decreases consistently as the number of training trajectories increases from $10^4$ to $10^5$, demonstrating clear convergence of the neural network approximation. This empirical convergence validates Theorem \ref{thm:convergence}, which establishes that as the size of $\mathcal{S}_{\text{obs}}$ increases, the neural network $F_\eta(x)$ converges to the true exit probability $\mathbb{P}_{{\rm exit}}(x)$.
\begin{figure}[h!]
    \centering
\includegraphics[width=0.4\linewidth]{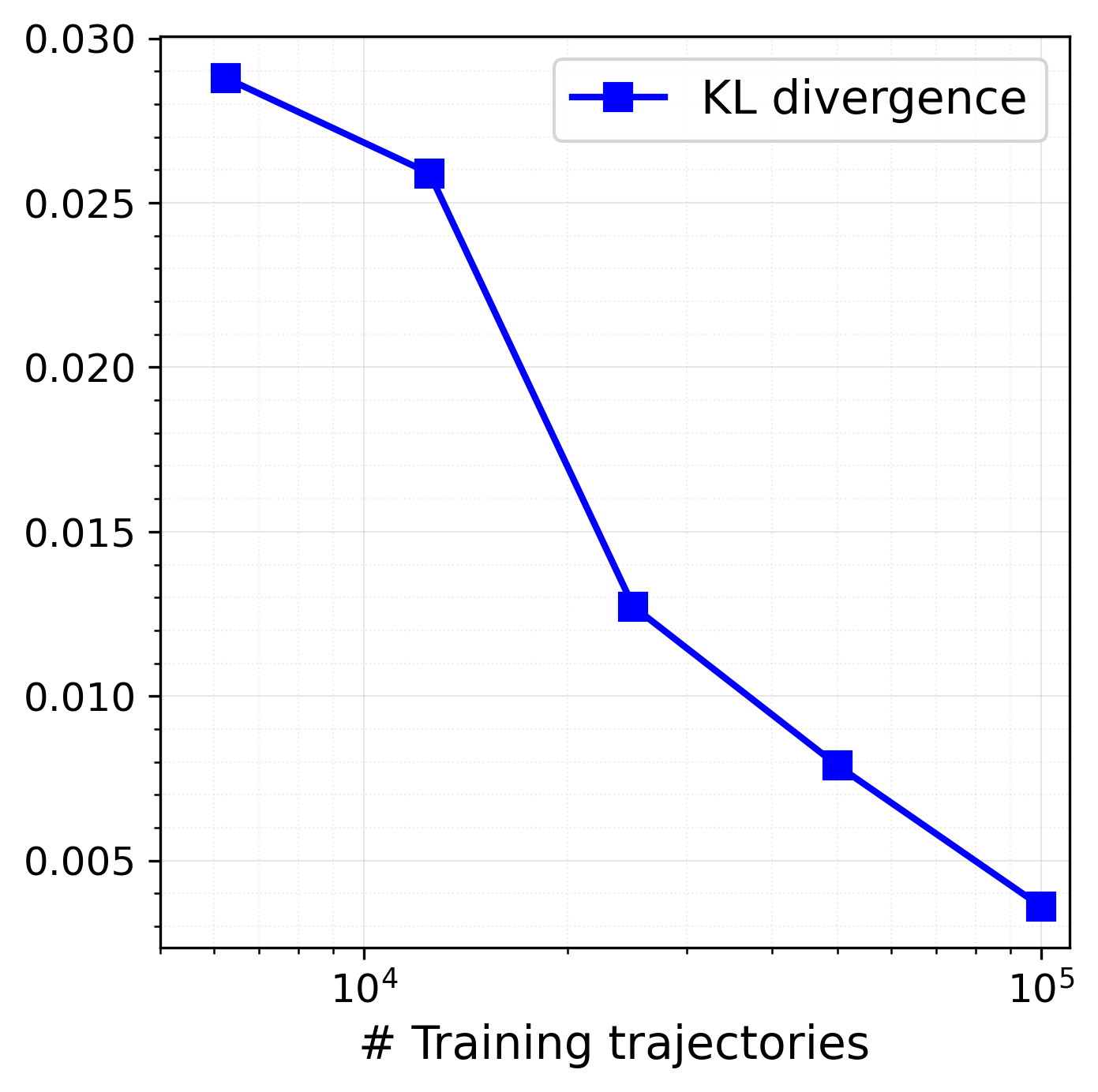}
    \caption{Convergence analysis for exit probability estimation. The plot shows the decay of KL divergence as a function of the number of training trajectories. The KL divergence measures the relative entropy between the neural network approximation and the analytical solution in Eq.~\eqref{eq:exit_probability}, computed over $10^4$ uniform samples within the domain $[0,L]$. This convergence behavior validates that as the size of $\mathcal{S}_{\text{obs}}$ increases, the neural network $F_\eta(x)$ converges to the true exit probability $\mathbb{P}_{{\rm exit}}(x)$.}
    \label{fig:ex1_convergence}
\end{figure}

Fig.~\ref{fig:ex1_trajectory} presents a comprehensive comparison of trajectory simulation methods for SDE $X_t$ in Eq.~\eqref{eq:ex1}, demonstrating the critical importance of proper exit dynamics modeling. The ground truth Monte Carlo simulation (left panel) establishes the reference behavior, showing particles starting from $x=1$ that gradually diffuse and exit through the absorbing boundaries at $x=0$ and $x=6$. Our unified method (second panel) achieves excellent agreement with the ground truth by combining the exit probability predictor $F_\eta(x)$ with the conditional diffusion model $G_\xi(x,z)$, successfully reproducing both the trajectory evolution patterns and the final particle distribution. In contrast, the naive approach of training a diffusion model on all trajectory data without escape/confined distinction (third panel) produces a fundamental artifact: an unphysical accumulation of particles near the boundary $x=0$, as evidenced by the spurious peak in the histogram. This occurs because the model learns to place particles near boundaries where they should have already exited, corrupting the physical representation. The fourth approach, training exclusively on confined trajectories, demonstrates the opposite pathology—the model becomes overly conservative and effectively prevents particle exit, leading to unrealistic over-population throughout the domain. These comparisons validate that exit phenomena require explicit probabilistic modeling rather than simple boundary conditions or selective training, which our unified framework successfully achieves through the principled separation of exit detection and state propagation.

\begin{figure}[htbp]
    \centering \includegraphics[width=0.95\linewidth]{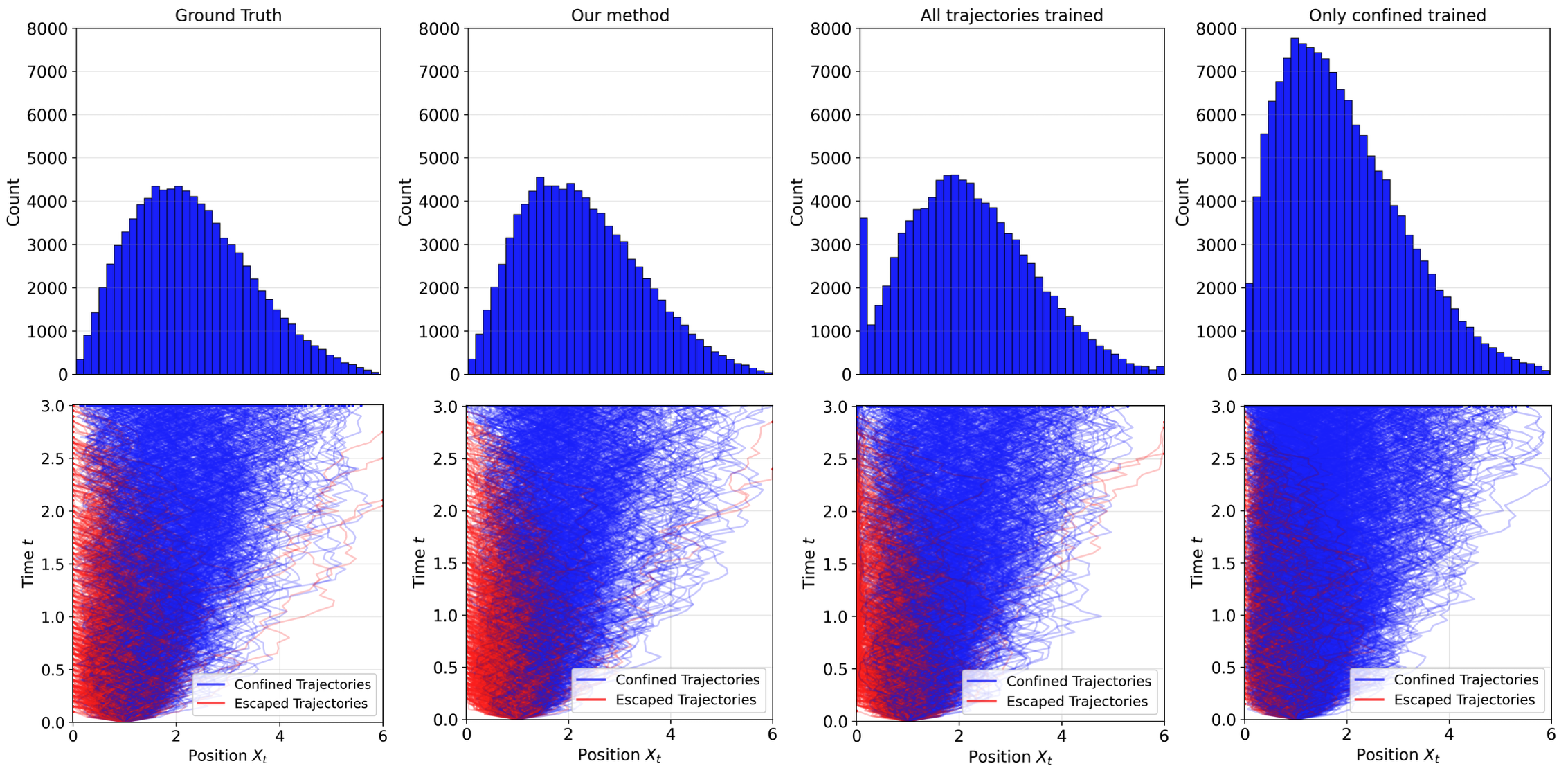}
    \caption{Comparison of trajectory simulation methods for $X_t$ in Eq.~\eqref{eq:ex1} absorbing boundaries at $x=0$ and $x=6$. Top row shows histograms of final particle positions at $T=3$ from $N_{\text{sample}}=200,000$ simulations, all starting from initial position $x=1$. Bottom row displays 500 representative trajectories colored by exit status (blue: confined, red: escaped). From left to right: (1) Ground truth Monte Carlo simulation, (2) Our unified method combining exit probability prediction and conditional diffusion, (3) Diffusion model trained on all trajectories without escape/confined distinction, showing artificial boundary accumulation, (4) Diffusion model trained only on confined trajectories, exhibiting unrealistic particle retention. The unified method accurately reproduces the ground truth behavior, while alternative approaches exhibit systematic biases that compromise physical accuracy.}
    \label{fig:ex1_trajectory}
\end{figure}

Table.~\ref{table:escape} provides quantitative validation of our method's accuracy in predicting particle confinement rate over time. The results demonstrate that our unified approach closely matches the ground truth confinement percentages at all time points, with deviations of around 1\% across all measured intervals. The method trained on all trajectories without escape/confined distinction shows moderate over-prediction of confinement rates, while the approach trained exclusively on confined trajectories severely overestimates confinement rates across all time points, predicting nearly double the true rate at later times. These quantitative results confirm that our unified framework accurately captures the temporal evolution of particle populations in bounded domains with absorbing boundaries.

\begin{table}[!ht]
\centering
{\fontsize{10}{9.5}\selectfont
\renewcommand{\arraystretch}{1.5}
\begin{tabular}{lcccc}
\toprule
\diagbox[width=2.cm]{Time}{Method} 
& Ground Truth 
& Our Method 
& \makecell[c] {All Trajectories\\ Trained} 
& \makecell[c]{Only Confined\\Trajectories Trained} \\
\midrule
\makecell[c]{T=1} & 65.08\%  & 66.23\% & 74.81\% & 89.14\% \\
\makecell[c]{T=2} & 49.94\%  & 50.63\% & 56.23\% & 79.26\% \\
\makecell[c]{T=3} & 43.80\%  & 44.22\% & 46.92\% & 72.62\% \\
\bottomrule
\end{tabular}}
\caption{Comparison of particle confinement rate (percentage of confined trajectories) across different simulation methods and time points, demonstrating the superior accuracy of our unified approach in capturing temporal evolution of particle populations.}
\label{table:escape}
\end{table}

\subsection{A 2D Stochastic Transport Problem}\label{sec:fluid}

To further validate our framework's capability in handling bounded domain problems, we consider a two-dimensional advection-diffusion transport system described by the SDE
\begin{equation}\label{ex3}
\left\{
\begin{aligned}
dx_1 &=  P_e\, v_{x_1} (x_1,x_2) \, dt +\, dW_1,\\ 
dx_2 &= P_e\, v_{x_2} (x_1,x_2) \,  dt + \, dW_2 \,   ,
\end{aligned}
\right.
\end{equation}
in the bounded domain $\mathcal{D} :=  [-\pi,\pi]\times[0,L]$ with a time-independent velocity field with components:
\begin{eqnarray}
v_{x_1} &= &- \pi \cos (\pi x_2)\sin (nx_1),  \\ 
v_{x_2} &=& n\sin (\pi x_2)\cos (nx_1),
\end{eqnarray}
where $dW_1$ and $dW_2$ are independent Wiener processes modeling the effect of diffusion. This cellular flow model represents transport in steady convective systems and poses challenges for traditional methods due to the combination of advective drift, stochastic diffusion, and domain boundaries.

The domain features periodic boundary conditions in $x_1$ and absorbing boundaries at $x_2 = 0$ and $x_2 = L$, where particles are removed upon contact. This creates a bounded domain problem where particle escape must be accurately modeled. We use the parameter values: $P_e=5$, $n=2$, $T_{\rm max} = 1$, and $L = 2$. The P\'eclet number $Pe = 5$ corresponds to a moderately advection-dominated regime where convective transport competes with diffusion. The numerical simulations employ a Monte Carlo solver with time step $\Delta t = 5\cdot 10^{-4}$ to ensure high temporal resolution. For the observation dataset $\mathcal{S}_{\text{obs}}$ defined in Eq.~\eqref{eq:obs}, we record trajectory snapshots at sampling interval $\Delta T = 0.05$.

Fig.~\ref{fig:ex2_exit} presents the exit probability $P_{\text{exit}}(x_1, x_2)$ with $\Delta T = 0.05$ across three different approaches. Our method (left panel) produces smooth, high-fidelity estimates, while the Monte Carlo simulations use 2000 samples for $n_x = 101$ and 1000 samples for $n_x = 31$ per grid point calculation. The Monte Carlo with $n_x = 101$ contour grid points (middle panel) exhibits statistical noise due to insufficient sampling relative to the fine contouring resolution, whereas the Monte Carlo with $n_x = 31$ contour grid points (right panel) shows smoother but less detailed contour curves. The exit probability pattern reveals a two-cell structure corresponding to the steady cellular flow with $n = 2$, where circulation centers around $x_1 = 0, \pm\pi$ create regions of low escape probability (dark blue) in the middle of the domain ($x_2 \approx 1$). High escape probability zones (red/orange) occur near the absorbing boundaries at $x_2 = 0, L$ and along the separatrices around $x_1 \approx \pm\pi/2$, where the flow structure facilitates particle transport toward the domain boundaries. The moderately advection-dominated regime with $P_e = 5$ creates a competition between convective trapping within circulation cells and diffusive escape, resulting in spatially heterogeneous patterns that reflect the underlying flow topology of this autonomous cellular flow model.

\begin{figure}[htbp]
    \centering \includegraphics[width=0.98\linewidth]{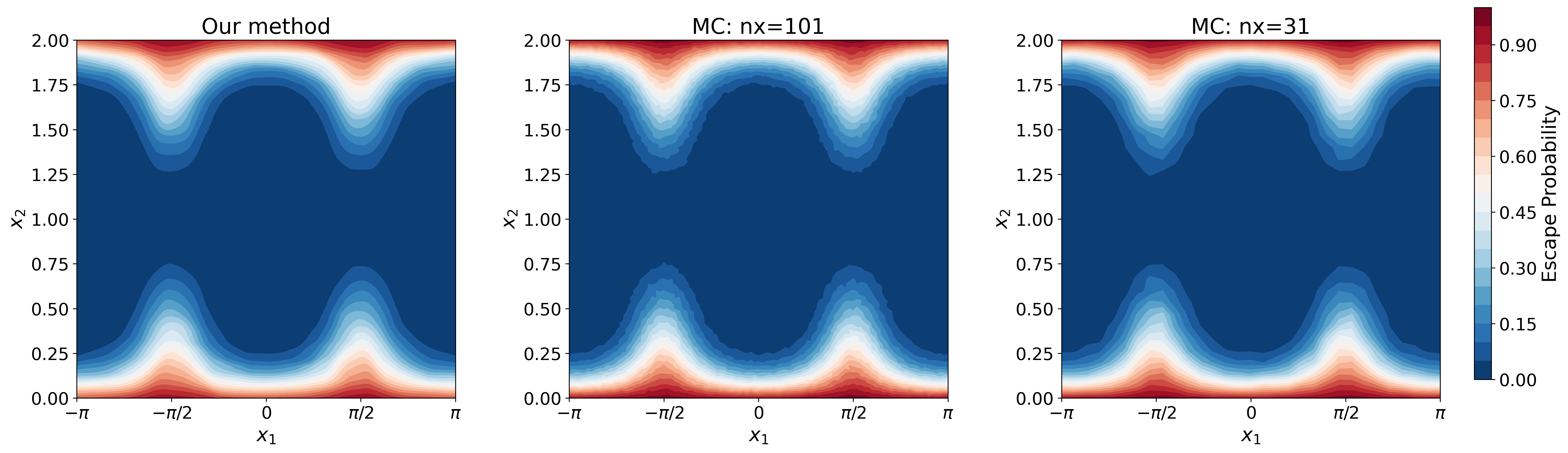}
     \caption{Exit probability $\mathbb{P}_{{\rm exit}}(x)$ with $\Delta T = 0.05$ for the 2D stochastic transport problem. Each panel shows $P_{\rm{exit}}(x_1, x_2)$ over the domain $[-\pi, \pi] \times [0, 2]$. Left: Our neural network prediction yielding smooth, high-fidelity results. Middle: Monte Carlo simulation with $n_x = 101$ contour grid points using 2000 samples per grid point. Right: Monte Carlo with $n_x = 31$ contour grid points using 1000 samples per grid point. The exit probability exhibits a two-cell structure with circulation centers at $x_1 = 0, \pm\pi$ showing low escape probability (dark blue) in the domain interior, and high escape probability zones (red/orange) near absorbing boundaries and separatrices around $x_1 \approx \pm\pi/2$.}
    \label{fig:ex2_exit}
\end{figure}

The spatial heterogeneity in exit probability demonstrates that different initial positions along $x_1$ exhibit varying sensitivity to exit phenomena, even though the absorbing boundaries are located in the $x_2$ direction. To demonstrate our model's capability in handling this sensitivity, we sample 50,000 trajectories from different starting locations to calculate the escape probability at $T_{\rm max}$ as a function of initial position $(x_1, x_2)$. We fix $x_2 = 1$ and vary $x_1$ over the range $[-\pi, \pi]$. Fig.~\ref{fig:ex2_exit_rate} shows the exit probability as a function of initial position $x_1$, revealing remarkable agreement between our method and the ground truth Monte Carlo simulation. The exit probability exhibits a clear periodic pattern reflecting the underlying cellular flow structure, with minimum values around the circulation centers at $x_1 = 0, \pm\pi$  and maximum values near the separatrices at $x_1 \approx \pm\pi/2$. This variation in exit probability across different initial positions highlights the strong influence of the flow topology on particle escape dynamics, where particles starting near separatrices experience enhanced transport toward boundaries while those near circulation centers remain more trapped within the cellular structure.
\begin{figure}[h!]
    \centering
\includegraphics[width=0.5\linewidth]{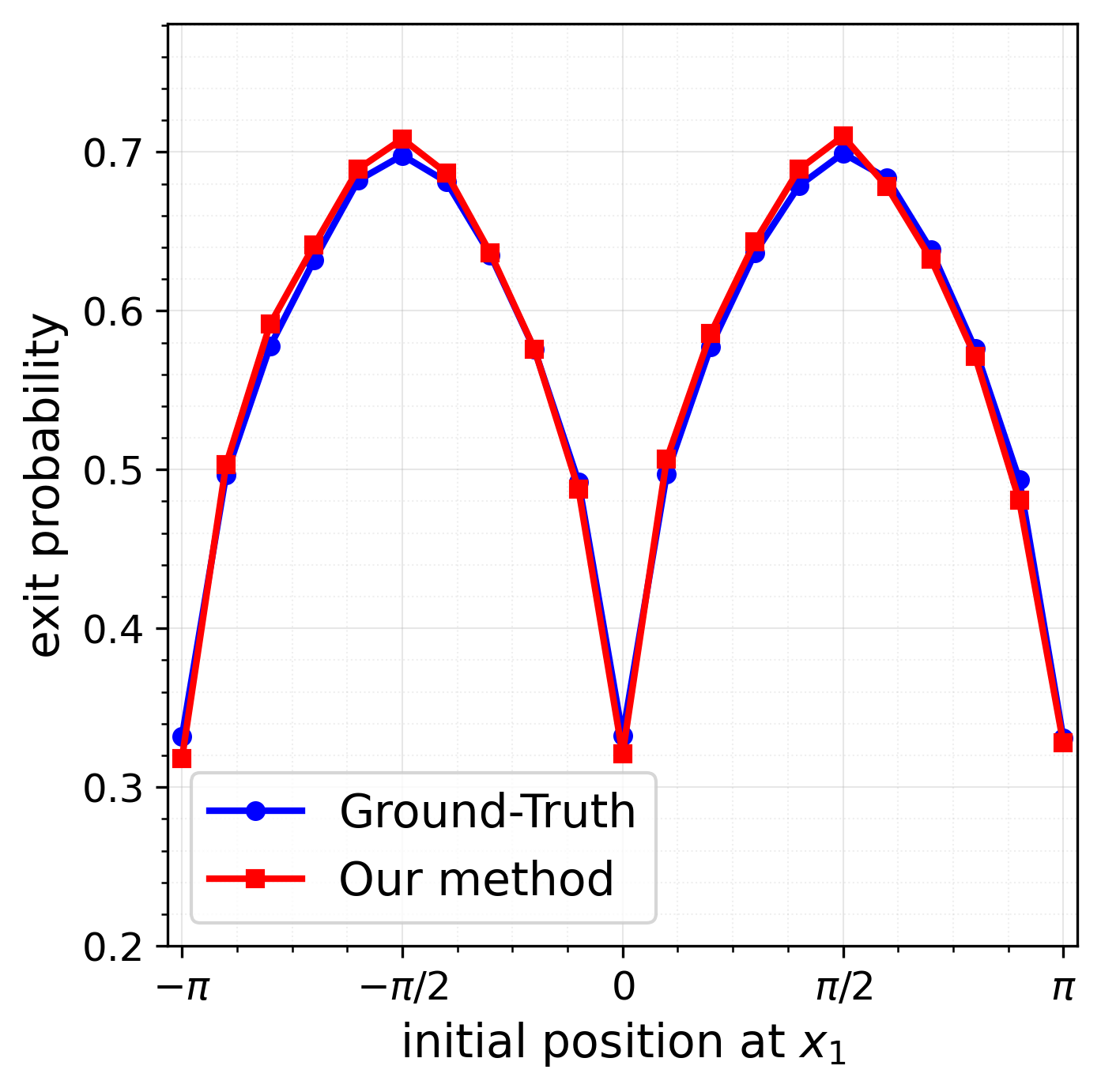}
 \caption{Exit probability at terminal time $T_{\rm max}$ as a function of initial position $x_1$ with fixed $x_2 = 1$. The plot shows exit probability calculated from $N_{\rm traj}=50,000$ trajectory samples for each initial position, i.e., $\text{exit probability} = N_{\rm escape}/N_{\rm traj}$. Our unified method (red squares) shows excellent agreement with ground truth Monte Carlo simulation (blue circles), accurately capturing the periodic variation in exit probability that reflects the underlying cellular flow structure with minima at circulation centers ($x_1 = 0, \pm\pi$) and maxima near separatrices ($x_1 \approx \pm\pi/2$).}
    \label{fig:ex2_exit_rate}
\end{figure}

Figs.~\ref{fig:ex2_marginal_distributions_2d_step_5_ini_3} and \ref{fig:ex2_marginal_distributions_2d_step_5_ini_2} present comprehensive comparisons of particle distributions at $T_{\rm max}$ to validate our framework's ability to reproduce the complete phase-space dynamics. Fig.~\ref{fig:ex2_marginal_distributions_2d_step_5_ini_3} shows results for particles initially sampled from $(x_1 = 0, x_2 = 1)$ with approximately 33,000 particles remaining in the domain, while Fig.~\ref{fig:ex2_marginal_distributions_2d_step_5_ini_2} presents results for initial conditions at $(x_1 = \pi/2, x_2 = 1)$ with approximately 15,000 particles remaining. Both figures display 1D marginal distributions and 2D contour plots comparing our method's predictions against Monte Carlo ground truth simulations. The results demonstrate excellent agreement across all distribution comparisons, successfully capturing the distinct particle evolution patterns that emerge from different initial positions within the cellular flow structure. For the $(x_1 = 0, x_2 = 1)$ initial condition, particles remain more concentrated around the circulation centers, while the $(x_1 = \pi/2, x_2 = 1)$ initial condition shows greater dispersion due to its proximity to the separatrix region. The strong quantitative agreement in both marginal and joint distributions, along with accurate particle count predictions, confirms that our framework effectively captures the complete stochastic dynamics while properly handling boundary escape phenomena for this autonomous 2D cellular flow transport problem.
\begin{figure}[h!]
    \centering \includegraphics[width=0.7\linewidth]{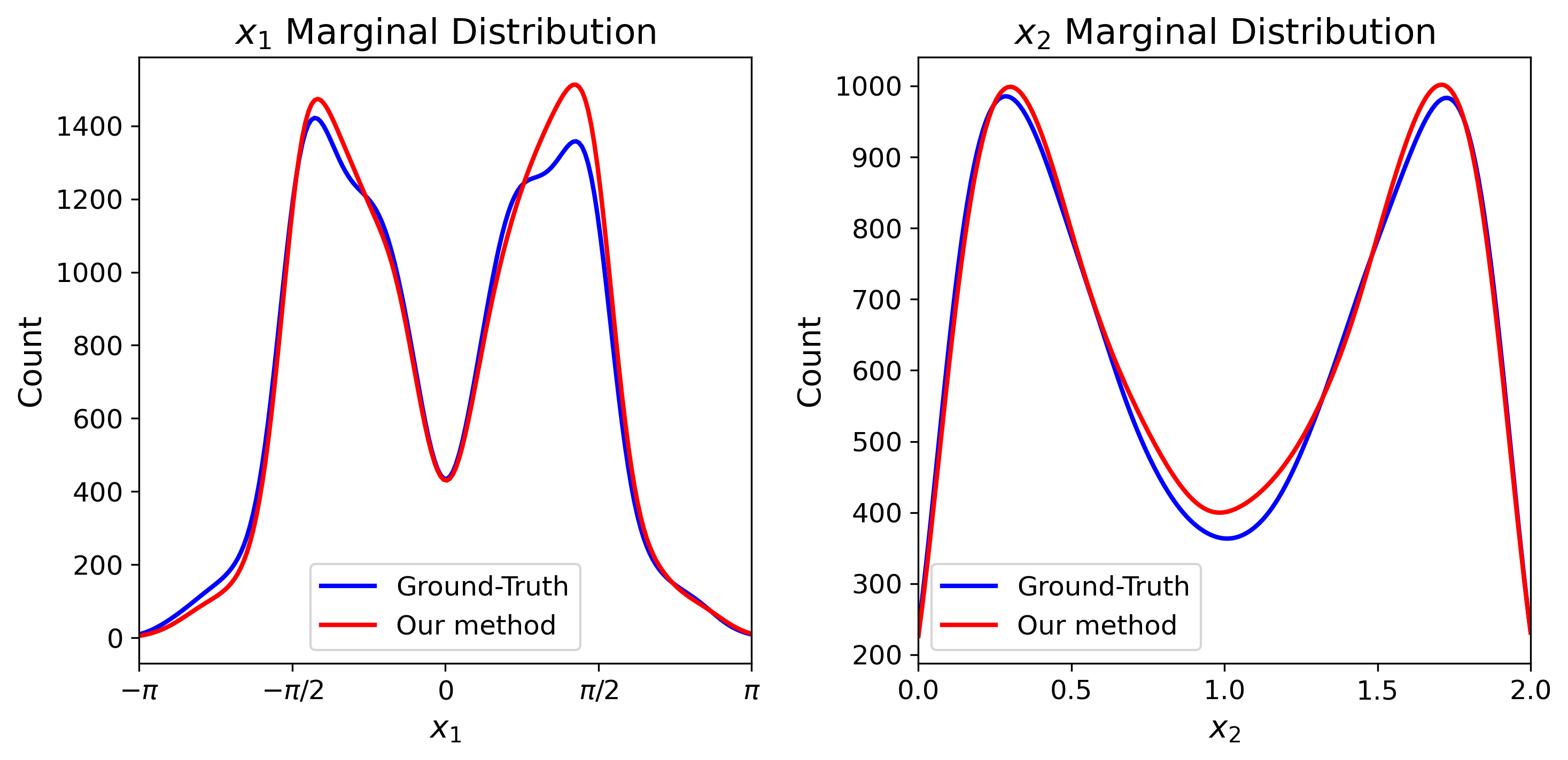}
\includegraphics[width=0.8\linewidth]{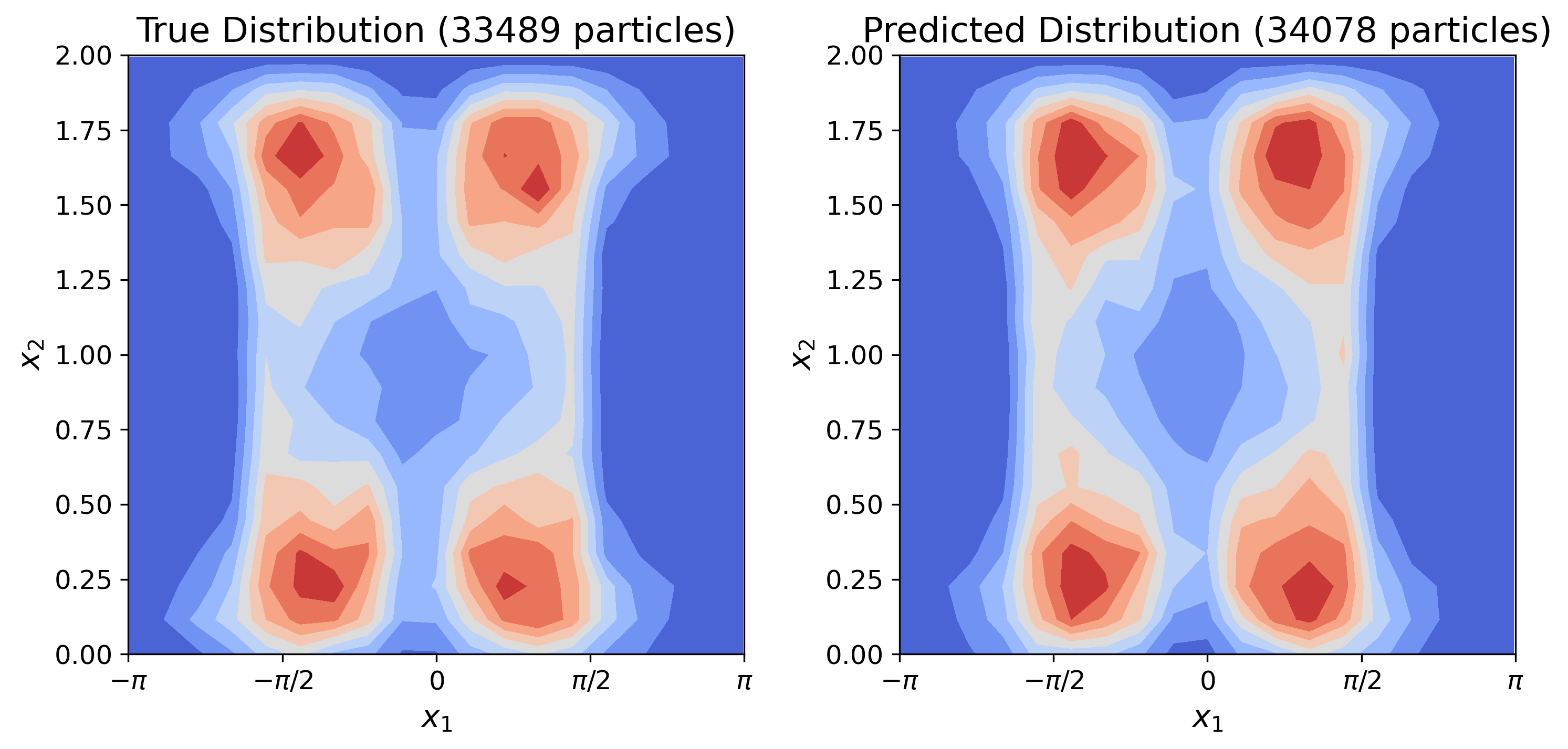}
\caption{Distribution comparison for particles initially sampled from $(x_1 = 0, x_2 = 1)$. Top panels show 1D marginal distributions for $x_1$ (left) and $x_2$ (right) comparing ground truth (blue) and our method (red). Bottom panels present 2D contour plots of the joint distribution: ground truth Monte Carlo simulation (left) with 33,489 particles and our method prediction (right) with 34,078 particles. The excellent agreement demonstrates our model's ability to accurately capture the complete phase-space evolution and particle count from this circulation center initial condition.}
\label{fig:ex2_marginal_distributions_2d_step_5_ini_3}
\end{figure}

\begin{figure}[h!]
    \centering \includegraphics[width=0.7\linewidth]{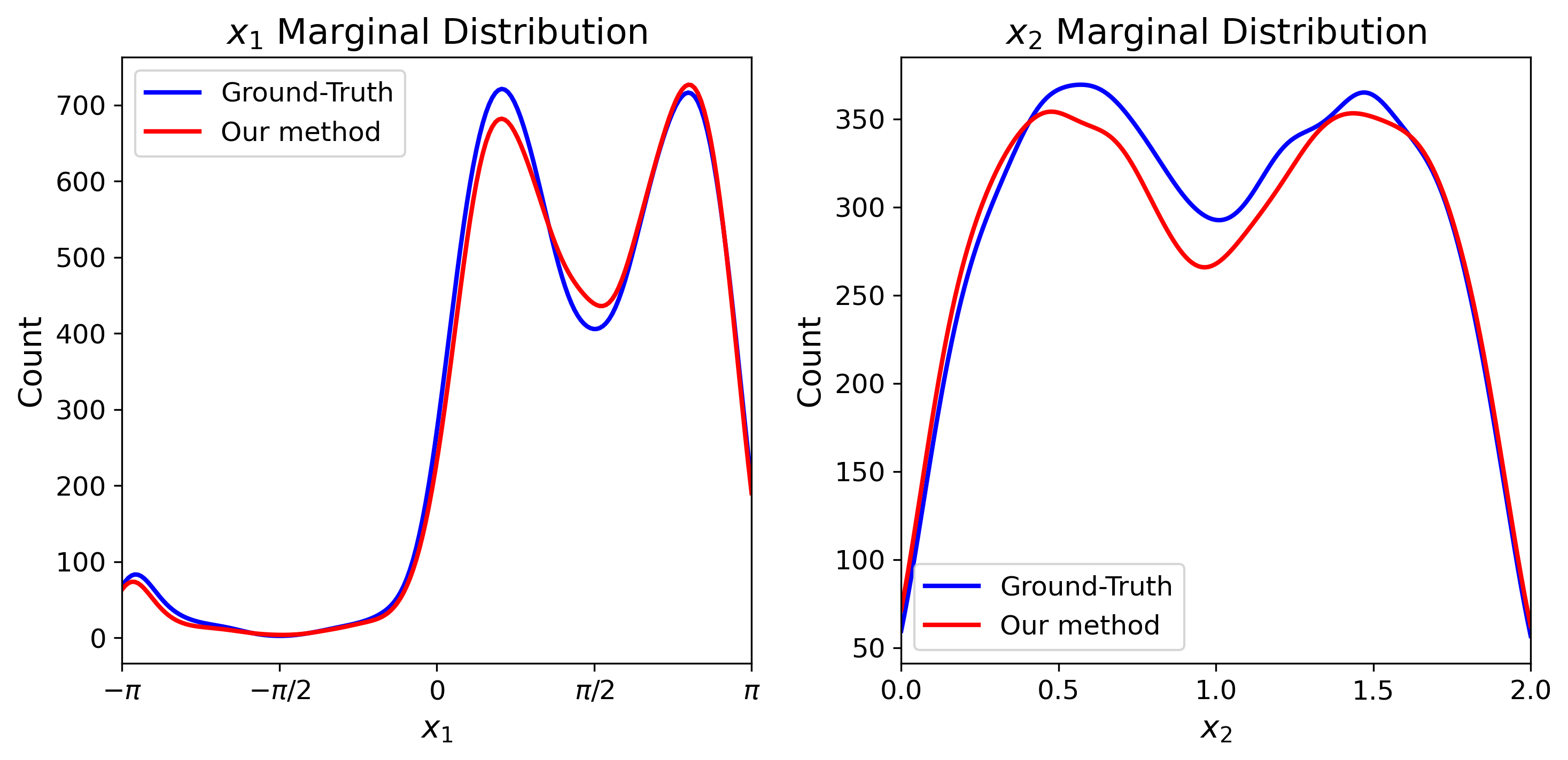}
\includegraphics[width=0.8\linewidth]{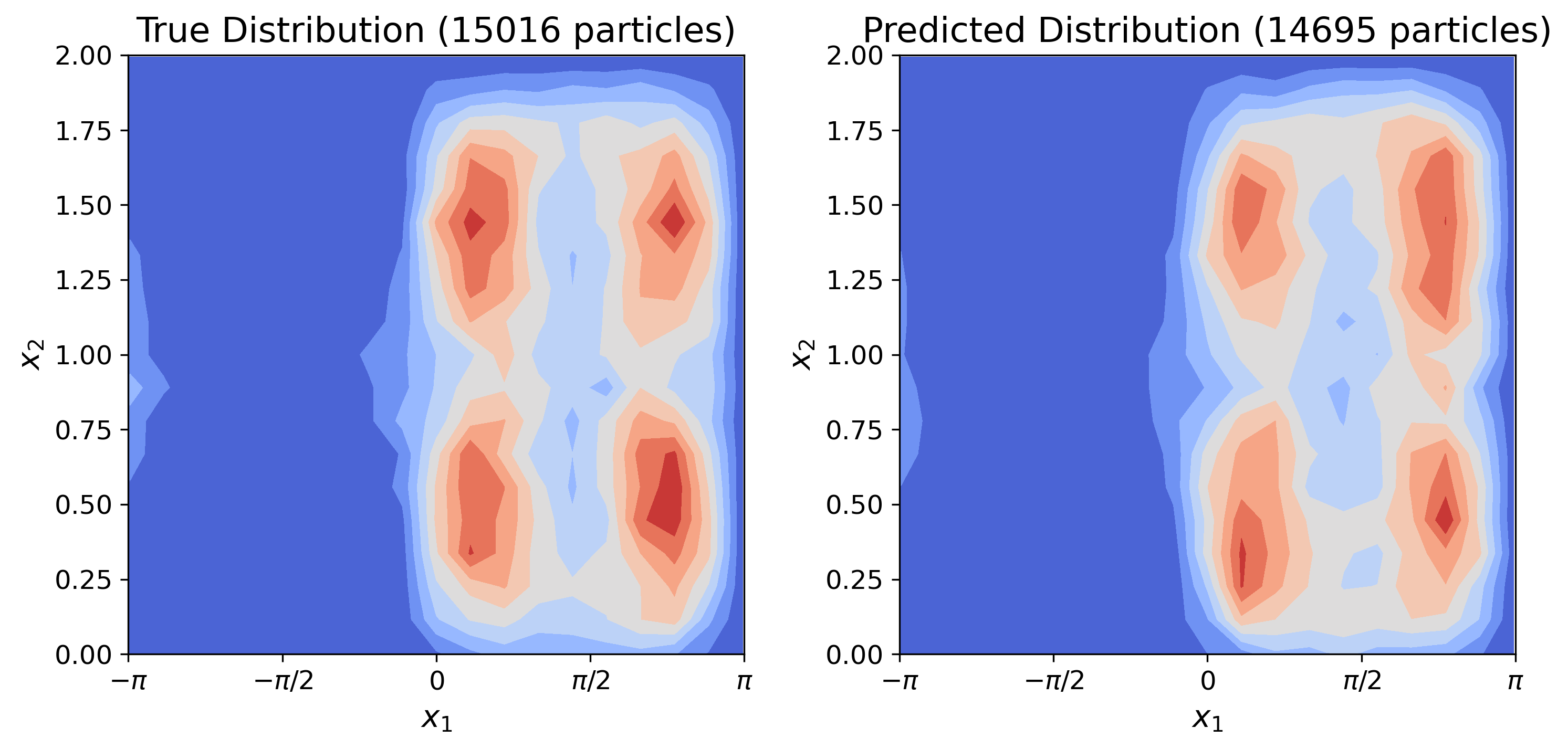}
\caption{Distribution comparison for particles initially sampled from $(x_1 = \pi/2, x_2 = 1)$. Top panels show 1D marginal distributions for $x_1$ (left) and $x_2$ (right) comparing ground truth (blue) and our method (red). Bottom panels present 2D contour plots of the joint distribution: ground truth Monte Carlo simulation (left) with 15,016 particles and our method prediction (right) with 14,695 particles. The strong agreement between simulations confirms our framework's capability to accurately reproduce distinct transport patterns and escape dynamics from this separatrix-proximate initial position.}
\label{fig:ex2_marginal_distributions_2d_step_5_ini_2}
\end{figure}

\subsection{Application to Runaway Electrons}\label{sec:RE_model}

In magnetically confined fusion plasmas, magnetic disruptions can give rise to the formation of high-energy electron beams, known as runaway electrons (RE). If not avoided or mitigated, escaping RE can produce significant damage the plasma facing components of fusion reactors leading to costly repairs and delays in the operation. Accordingly, a major goal of fusion research is the accurate computation of the rate of loss of confinement of RE. In this section we illustrate how the proposed generative AI hybrid framework can be used to study this problem.

\subsubsection{Runaway Electron Model and Simulation Setup}

The evolution of the RE is modeled using the following system of three-dimensional stochastic differential equations
\begin{equation}\label{eq_RE3d}
\left\{
\begin{aligned}
dp &=  \left [E \xi\, - \frac{\gamma p}{\tau}(1-\xi^2)-C_F(p) +\frac{1}{p^2}\frac{ \partial }{\partial p} \left(p^2 C_A\right) \right ] dt + \sqrt{2 C_A} \, dW_p,\\ 
d \xi &= \left[ \frac{E \left(1-\xi^2\right)}{p} + \frac{\xi (1-\xi^2)}{\tau \gamma}
-2 \xi \frac{C_B}{p^2} \right ] dt + \frac{\sqrt{2 C_B}}{p}  \,  \sqrt{1-\xi^2}\, dW_\xi \, ,\\
dr &= \frac{\partial D_r}{\partial r} dt + \sqrt{2D_r} dW_r,
\end{aligned}
\right.
\end{equation}
where $p$ is the relativistic momentum magnitude, $\xi = \cos \theta$ is the cosine of the pitch-angle, and $r$ is the minor radius. The system incorporates electric field acceleration $E$, synchrotron radiation damping, Coulomb collisions, i.e., momentum diffusion $C_A$, pitch angle scattering $C_B$, Coulomb drag $C_F$, and radial magnetic diffusion.

Time is normalized using the collisional time $1/\tilde{\nu}_{ee}$, where $\tilde{\nu}_{ee} = \frac{\tilde{n} e^4 \ln \tilde{\Lambda}}{4 \pi \epsilon_0^2 m_e^2 \tilde{v}_T^3}$ is the reference electron-electron thermal collision frequency. Momentum is normalized using $m \tilde{v}_T$, where $\tilde{v}_T = \sqrt{2 \tilde{T} / m}$ is the reference electron thermal velocity, and the electric field is normalized using $m \tilde{v}_T \tilde{\nu}_{ee} / e = \tilde{E}_D / 2$. The parameter $\tau = \tilde{\nu}_{ee} \tau_r$, where $\tau_r = 6 \pi \epsilon_0 m_e^3 c^3 / (e^4 B^2)$ is the synchrotron radiation time scale.
The collision coefficients are defined as:
\begin{equation}
C_A(p) = \bar{\nu}_{ee} \bar{v}_T^2 \frac{\psi(y)}{y}, \quad C_F(p) = 2\bar{\nu}_{ee} \bar{v}_T \psi(y), \quad C_B(p) = \frac{1}{2} \bar{\nu}_{ee} \bar{v}_T^2 \frac{1}{y} \left[ Z + \phi(y) - \psi(y) + \frac{y^2}{2}\delta^4 \right],
\end{equation}
where $\phi(y) = \frac{2}{\sqrt{\pi}}\int_0^y e^{-s^2}ds$, $\psi(y) = \frac{1}{2y^2}[\phi(y) - y\frac{d\phi}{dy}]$, and $y = \frac{1}{\bar{v}_T}\frac{p}{\gamma}$ with $\gamma = \sqrt{1 + (\tilde{\delta}p)^2}$, $\tilde{\delta} = \frac{\tilde{v}_T}{c} = \sqrt{\frac{2\tilde{T}}{mc^2}}$, $\delta = \frac{\hat{v}_T}{c} = \sqrt{\frac{2\hat{T}_f}{mc^2}}$, $\bar{v}_T = \sqrt{\frac{\hat{T}_f}{\tilde{T}}}$, and $\bar{\nu}_{ee} = \left(\frac{\tilde{T}}{\hat{T}_f}\right)^{3/2} \frac{\ln\hat{\Lambda}}{\ln\tilde{\Lambda}}$. Here $Z$ represents the ion effective charge.

The radial diffusivity is modeled as $D_r = D_0 F(r)G(p)$ with $D_0 = 0.01$ and
\begin{equation}
F(r) = \frac{1}{2}\left\{1 + \tanh\left[\frac{r-r_m}{L_D}\right]\right\}, \quad G(p) = e^{-(p/\Delta p)^2},
\end{equation}
where $r_m=0.5$ determines the boundary of magnetic stochasticity and $\Delta p$ determines the momentum scale beyond which electron orbits become insensitive to radial diffusion.
The electric field follows $E = E_0 \left[ \frac{\tilde{T}}{\hat{T}_f} \right]^{3/2}$ with typical model parameters $Z = 1$, $\tau = 6 \times 10^3$, $E_0 = 1/2000$, $\tilde{T} = 3$, $\hat{T}_f = 0.05$, and $mc^2 = 500$.

We solve the SDE system in Eq.~\eqref{eq_RE3d} using the Euler-Maruyama method with time step $\Delta t = 0.005$ over the domain $\mathcal{D} = \{(p,\xi,r) : p \in (0.5, 5), \xi \in (-1,1), r \in (0,1)\}$ up to terminal simulation time $T = 20$ and record states at intervals $\Delta t_{\rm obs} = 0.2$. A reflecting boundary condition is imposed at $r = 0$, while particles reaching $r \geq 1$ are considered escaped. 
 The initial conditions are derived from a family of Maxwell distributions 
\begin{equation}\label{eq_max}
f(p,\xi,r,t_0) = \frac{2p^2}{\pi^{1/2} p_0^3} e^{-(p/p_0)^2}, \quad \text{where} \,\,p_0 = \sqrt{\frac{\hat{T}_0}{\tilde{T}}},
\end{equation}
normalized such that $\int_{0}^1\int_{-1}^{1}\int_{p_{\min}}^{p_{\max}} f_0 (p,\xi,r) dp d\xi dr =1$ and parameterized by $\hat{T}_0$.

For improved numerical conditioning, we transform particle coordinates to physically meaningful components $p_{\parallel} = p\xi$ and $p_{\perp} = p\sqrt{1-\xi^2}$. The escape indicator is defined as
\begin{equation}
   \Gamma(p,\xi,r) :=\begin{cases}
       1, & r \geq 1 \quad \text{(escape)}, \\
       0, & r < 1 \quad \text{(non-escape)}.
   \end{cases}
\end{equation}
This setup generates datasets $\mathcal{D}_{\rm obs} = \{(x_i, \Delta{x}_i, \gamma_i)\}$ for flow map learning, where $x = (p_{\parallel}, p_{\perp}, r)$ represents the particle state.

\subsubsection{Escape Probability Estimation}

The escape prediction model employs a fully connected feedforward neural network with three hidden layers, each containing 256 neurons and LeakyReLU activation functions. Dropout with a rate of 0.2 is applied after each hidden layer to mitigate overfitting. The model is trained using the binary cross-entropy loss function, optimized with the Adam algorithm with a learning rate $l_r = 0.005$ and weight decay $10^{-5}$, trained for 100 epochs.



Fig.~\ref{fig:escape_cross_t20} shows cross-sectional comparisons of the escape probability $\mathbb{P}_{\rm exit}(p,\xi,r)$ with $\Delta t_{\rm obs}=0.2$ plotted over three 2D projections of the initial condition space: $(\theta, r)$ (top row), and $(r, p)$ (bottom row). The left column presents predictions from the escape prediction model, while the middle and right columns show Monte Carlo (MC) simulation results at high ($n=101$) and low ($n=31$) resolution, respectively. The model yields smooth, physically consistent estimates of escape probability, closely matching the converged high-resolution MC results while substantially outperforming the low-resolution MC simulations, which display considerable noise due to sample sparsity.
The increased exit probability at lower momenta observed in Fig.~\ref{fig:escape_cross_t20} results from the momentum-dependent radial diffusivity $D_r = D_0 F(r) G(p)$ with $G(p) = e^{-(p/\Delta p)^2}$. Since $G(p)$ increases as momentum decreases, low-energy particles experience stronger radial diffusion and higher escape rates compared to high-energy particles.

\begin{figure}
    \centering \includegraphics[width=0.95\linewidth]{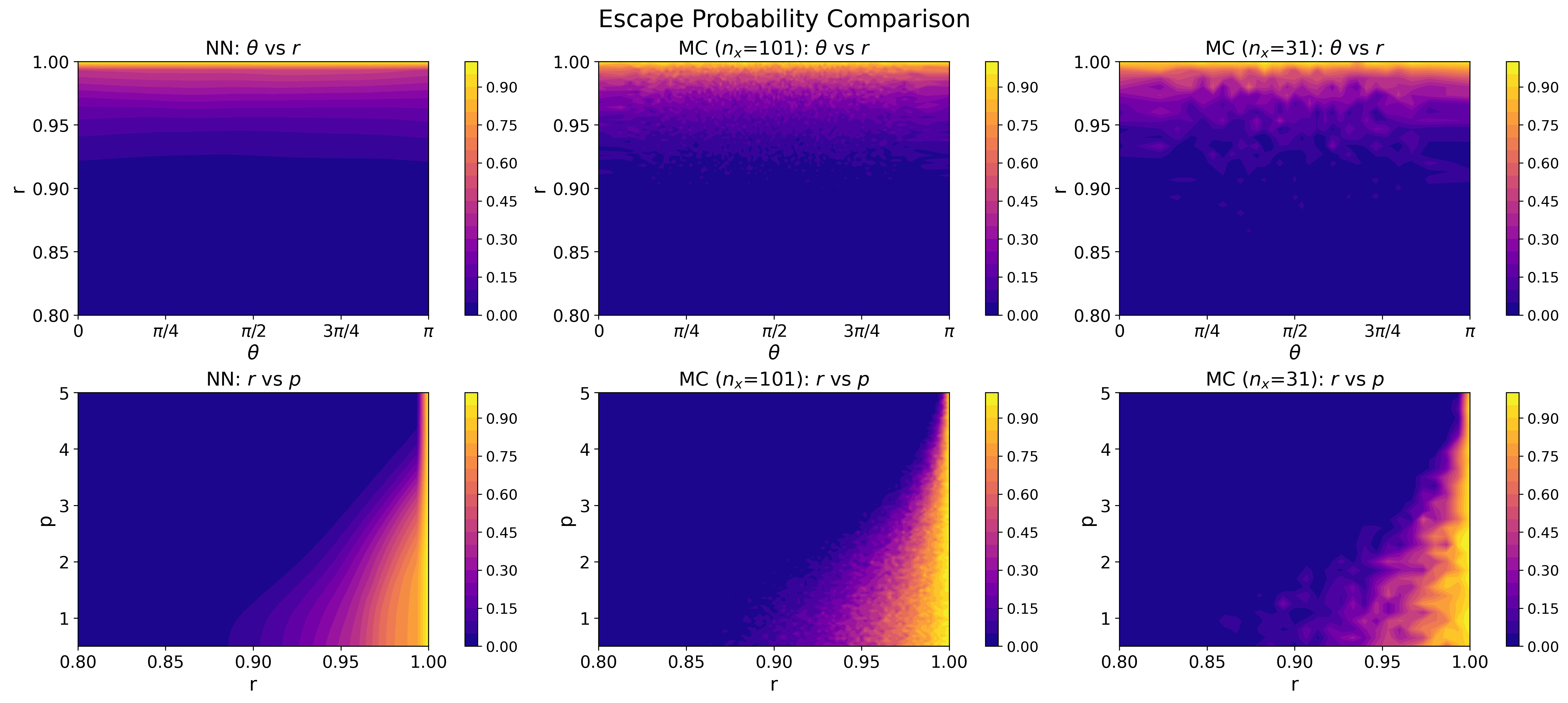}
   \caption{Escape probability cross-section comparison for runaway electron dynamics. Each row shows a projection of $P_{{\rm exit}}(p, \xi, r)$ onto 2D slices of the 3D phase space: $(p, \theta)$ (top) and $(r, p)$ (bottom). The escape prediction neural network (left column) yields smooth, high-fidelity results comparable to high-resolution Monte Carlo simulation with $n = 101$ grid points (middle column), and clearly outperforms low-resolution Monte Carlo with $n = 31$ grid points (right column), which suffers from statistical noise. The increased exit probability at lower momenta results from momentum-dependent radial diffusivity, where low-energy particles experience stronger radial transport.}
    \label{fig:escape_cross_t20}
\end{figure}

\subsection{Estimating the Probability Density of Runaway Electrons}\label{sec:numeric_RE}
In this section, we employ the proposed hybrid diffusion model discussed in Section~\ref{sec:diffusion_model} to compute the probability density function $f(p_t,\theta_t, r_t)$ related to the runaway electron generation model.
For the diffusion model, we implemented the training-free score estimation approach described in Section~\ref{sec:score} using $K=5000$ discretization steps for solving the reverse ODE.  We use 2048 nearest neighbors for the Monte Carlo estimation of the score function, striking a balance between accuracy and computational efficiency. The final generative model $G_\theta$ is a fully connected neural network with one hidden layer of 128 neurons, trained for 5000 epochs.

To evaluate the capability of the diffusion model in reproducing the phase-space behavior of runaway electrons, we compare the marginal probability density functions (PDFs) in $(p, \theta, r)$ against Monte Carlo (MC) simulation results for two initial Maxwellian distributions with  parameter $\hat{T}_0 = 4$ and $\hat{T}_0 = 10$, as shown in Fig.~\ref{fig:3D_marginal_PDF_maxwell}. Each panel displays marginal PDFs at time  $t=10$ and $t=30$, where $t=30$ lies in the model's prediction region beyond the training window. The diffusion model predictions agree closely with the MC benchmarks across all three variables and at both time points, confirming its ability to generalize beyond the training domain. The marginal PDFs show expected dependencies on initial temperature: higher $\hat{T}_0$ leads to broader momentum distributions and enhanced forward beaming in the pitch-angle direction.

\begin{figure}
    \centering
    \includegraphics[width=0.9\linewidth]{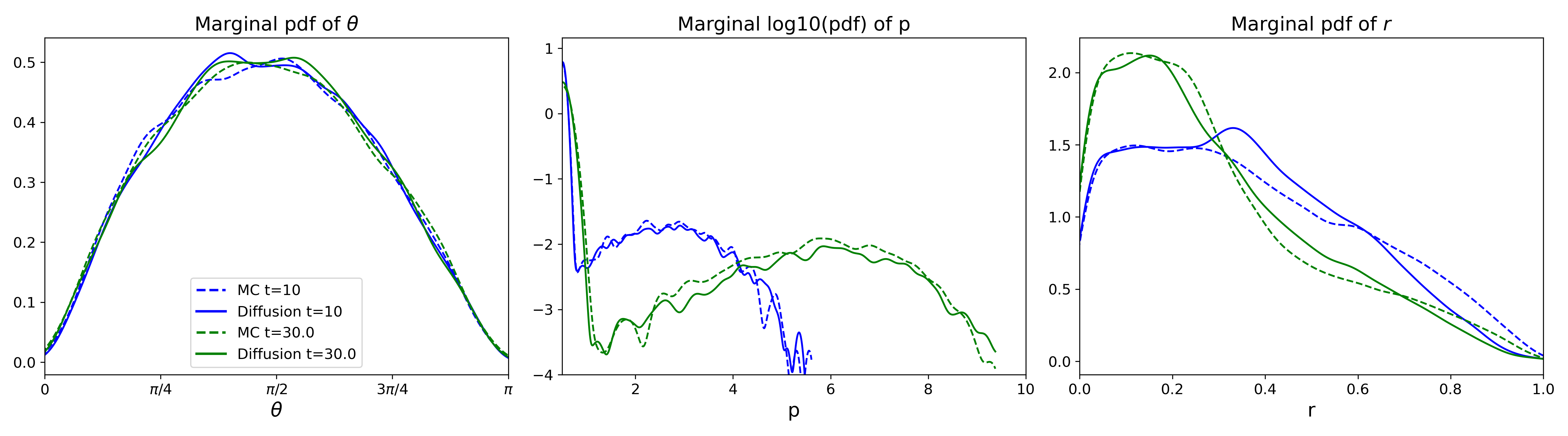}
    \includegraphics[width=0.9\linewidth]{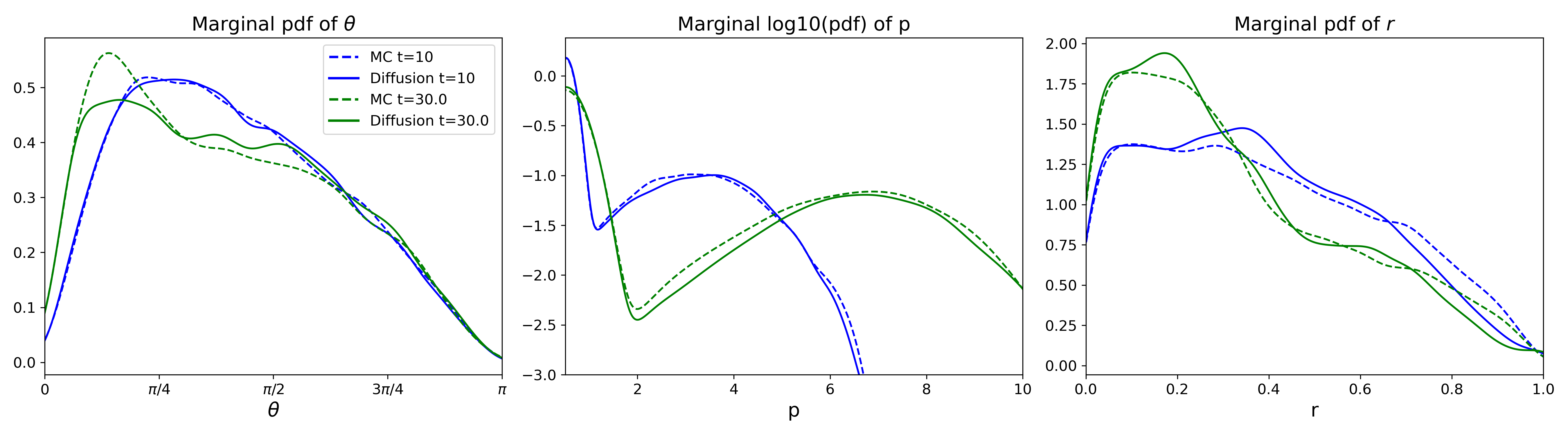}
    \caption{ Marginal PDFs of $\theta$, $p$, and $r$ at $t=10$ and $t=30$ for initial condition $\hat{T}_0 = 4$ (top), and $\hat{T}_0 = 10$ (bottom). The diffusion model (solid lines) shows strong agreement with Monte Carlo simulation (dashed lines) across all dimensions.}
    \label{fig:3D_marginal_PDF_maxwell}
\end{figure}

Fig.~\ref{fig:3D_PDF_theta_p_maxwell} presents 2D cross-sectional contour plots in the $(\theta, p)$ space at $t = 10$, $t = 20$, and $t = 30$. The comparison demonstrates strong agreement between Monte Carlo simulations (left panels) and our hybrid model (right panels) across all time frames. The temporal evolution clearly shows the formation and forward movement of the hot-tail population toward higher momentum values, with our model successfully reproducing this behavior and predicting evolution beyond the training window. Similar strong agreement between Monte Carlo simulations and our hybrid model is observed in the $(p, r)$ and $(r, \theta)$ cross-sectional views (not shown), confirming the model's accuracy across all coordinate planes and its ability to capture the full anisotropic dynamics of the particle evolution.
\begin{figure}[h!]
    \centering
    \includegraphics[width=0.7\linewidth]{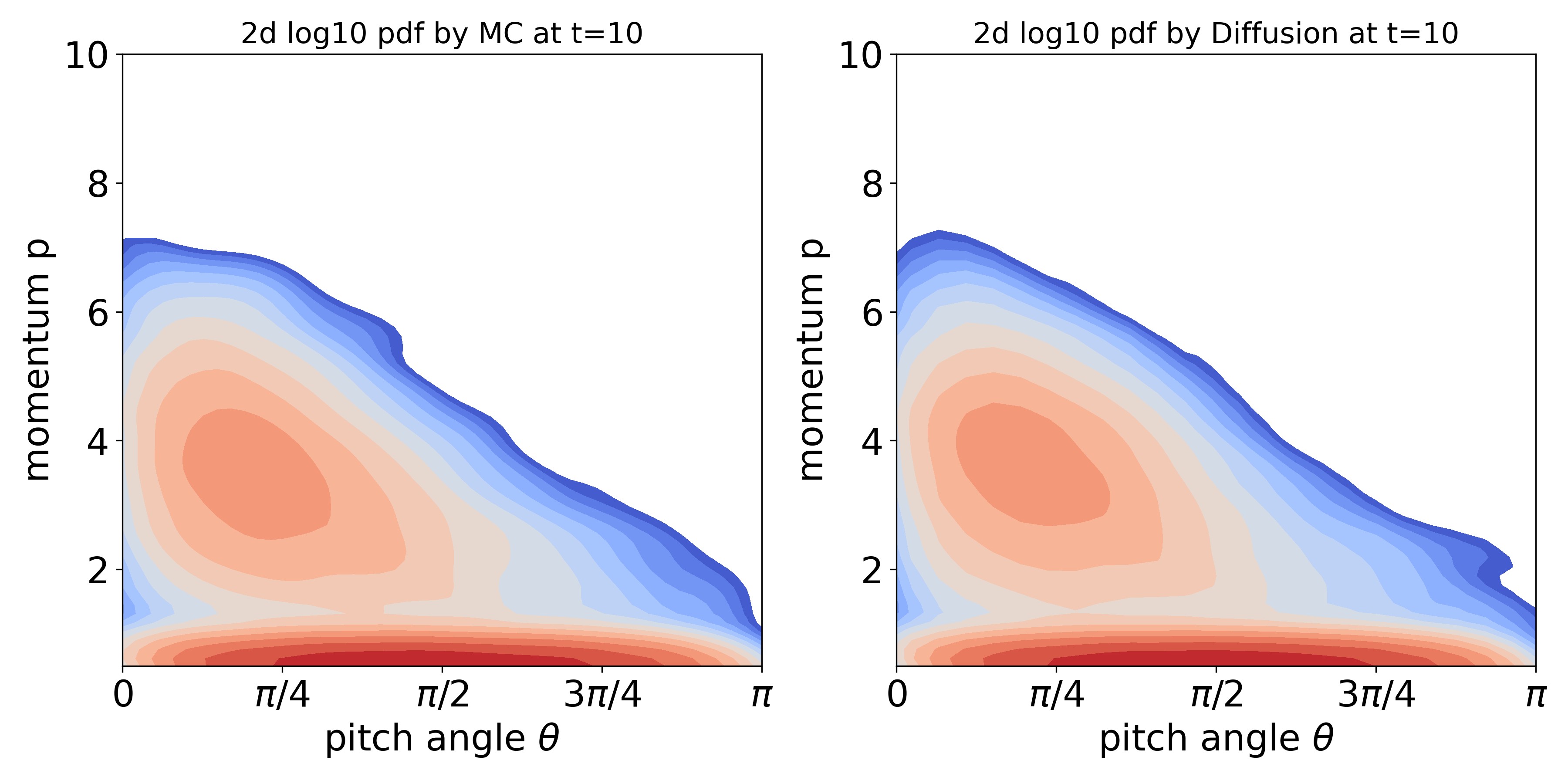}\\
    \includegraphics[width=0.7\linewidth]{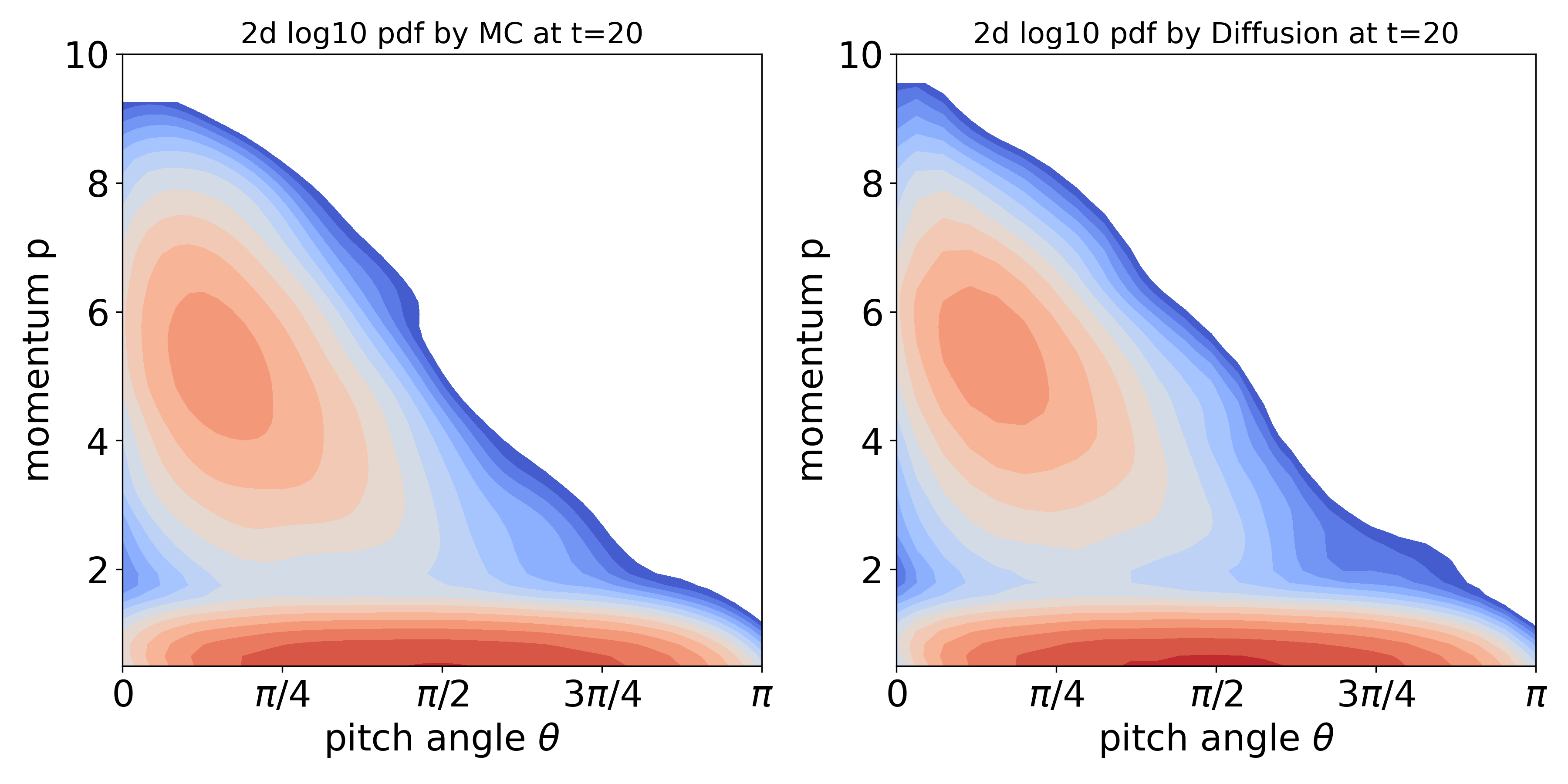}\\
    \includegraphics[width=0.7\linewidth]{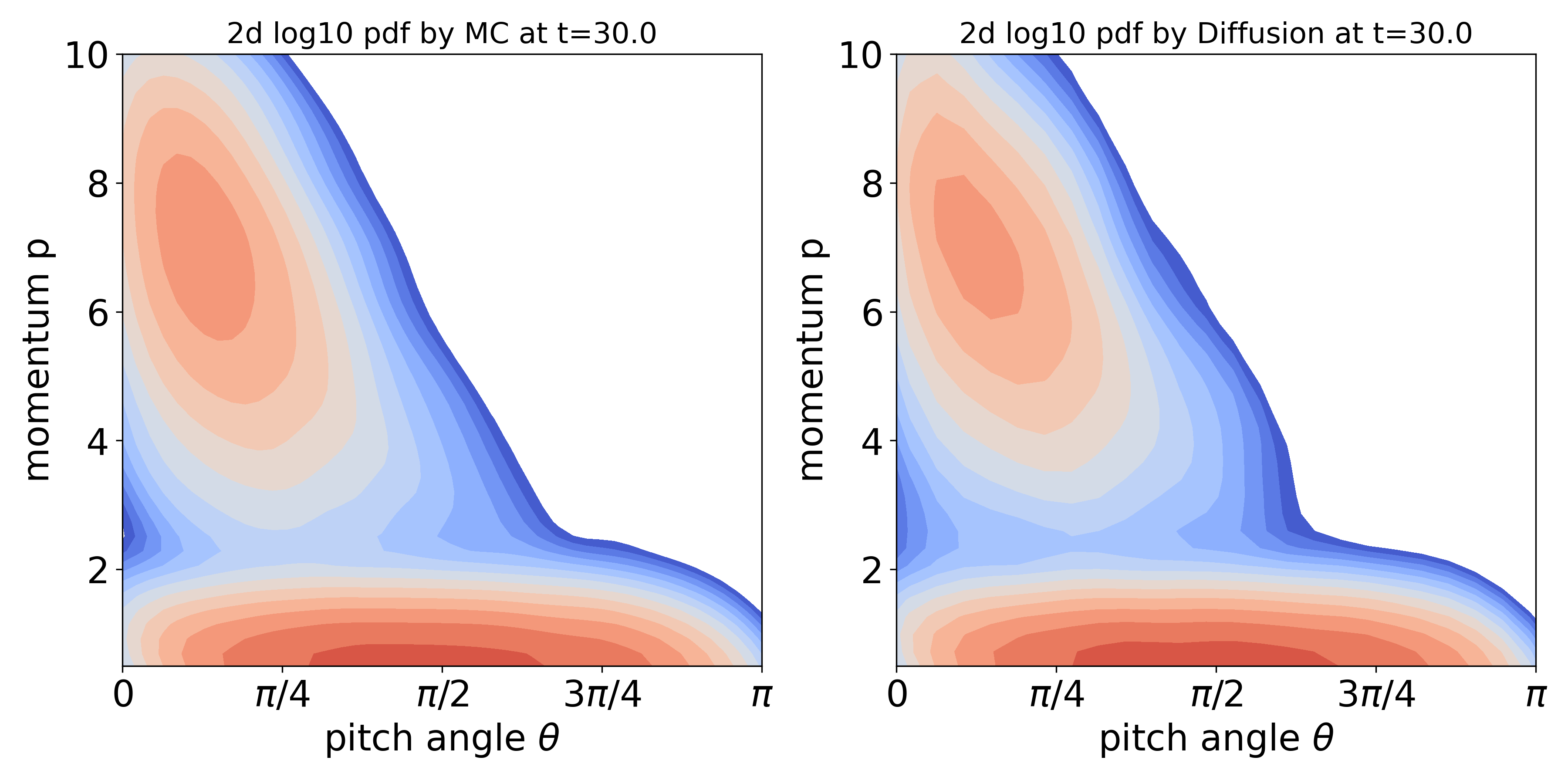}\\
    \caption{2D distribution ${\rm log}_{10}$-pdf $f(\theta,p, t)$ at $t=10$ (top), $t=20$ (middle) and $t=30$ (bottom) for initial condition $\hat{T}_0 = 10$. Left: Monte Carlo simulation. Right: Hybrid diffusion model. Note that $t=30$ is a prediction time state beyond the training window.}
    \label{fig:3D_PDF_theta_p_maxwell}
\end{figure}

To assess the accuracy of the PR-NF method in predicting the whole dynamics for varying initial conditions, we evaluate the following quantity of interest:
\begin{equation}\label{eq_pr}
n_{\rm RE} = \int_{0}^{1}\int_{-1}^{1}\int_{p^*}^{p_{\max}} f_{t} (p,\xi,r) \, dp \, d\xi \,dr,
\end{equation}
which represents the total fraction of runaway electrons generated by the hot-tail mechanism during the thermal quench. Here, $f_{t}$ denotes the electron distribution at time $t$, $p^* = 1.75$ is the threshold momentum for runaway electrons, and $p_{\max}$ is a numerical cutoff. We conduct this analysis over a set of Maxwellian initial conditions defined in Eq.~\eqref{eq_max}, with $\hat{T}_0$ ranging from 1 to 10.

The left panel in Fig.~\ref{fig:3d_PR} shows $n_{\rm RE}$ at $t = 5$ and $t = 20$ as a function of $\hat{T}_0$, highlighting the impact of initial thermal energy on runaway production. The right panel presents the temporal evolution of $n_{\rm RE}$ for two specific initial conditions, $\hat{T}_0 = 4$ and $\hat{T}_0 = 10$. These results demonstrate that the diffusion-based model combined with the escape prediction network achieves excellent agreement with Monte Carlo (MC) simulations. Moreover, the model accurately captures how low initial energy electrons thermalize more rapidly—resulting in lower $n_{\rm RE}$—while high energy electrons sustain runaway behavior. 
\begin{figure}[h!]
    \centering
    \includegraphics[width=0.4\linewidth]{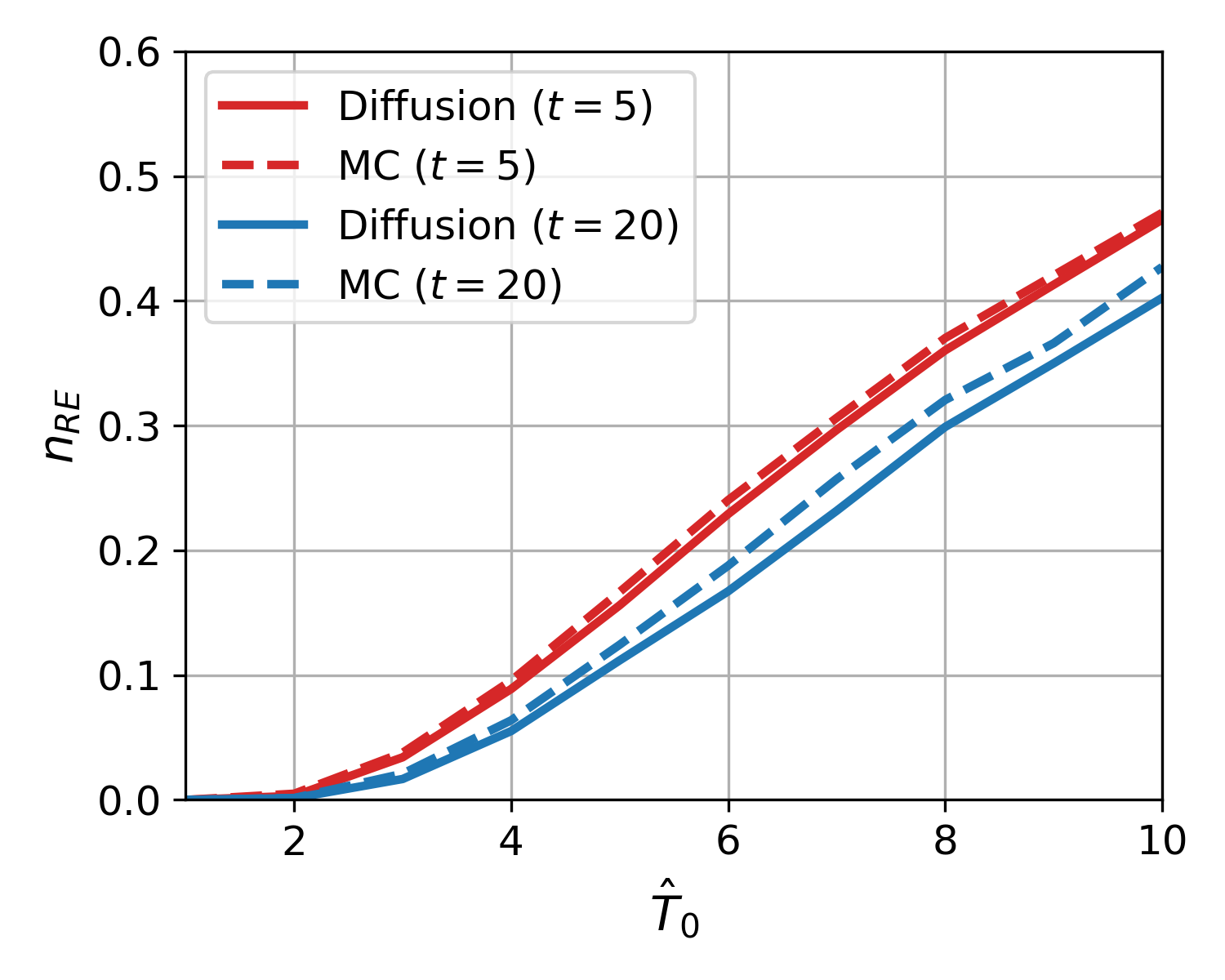}
    \includegraphics[width=0.4\linewidth]{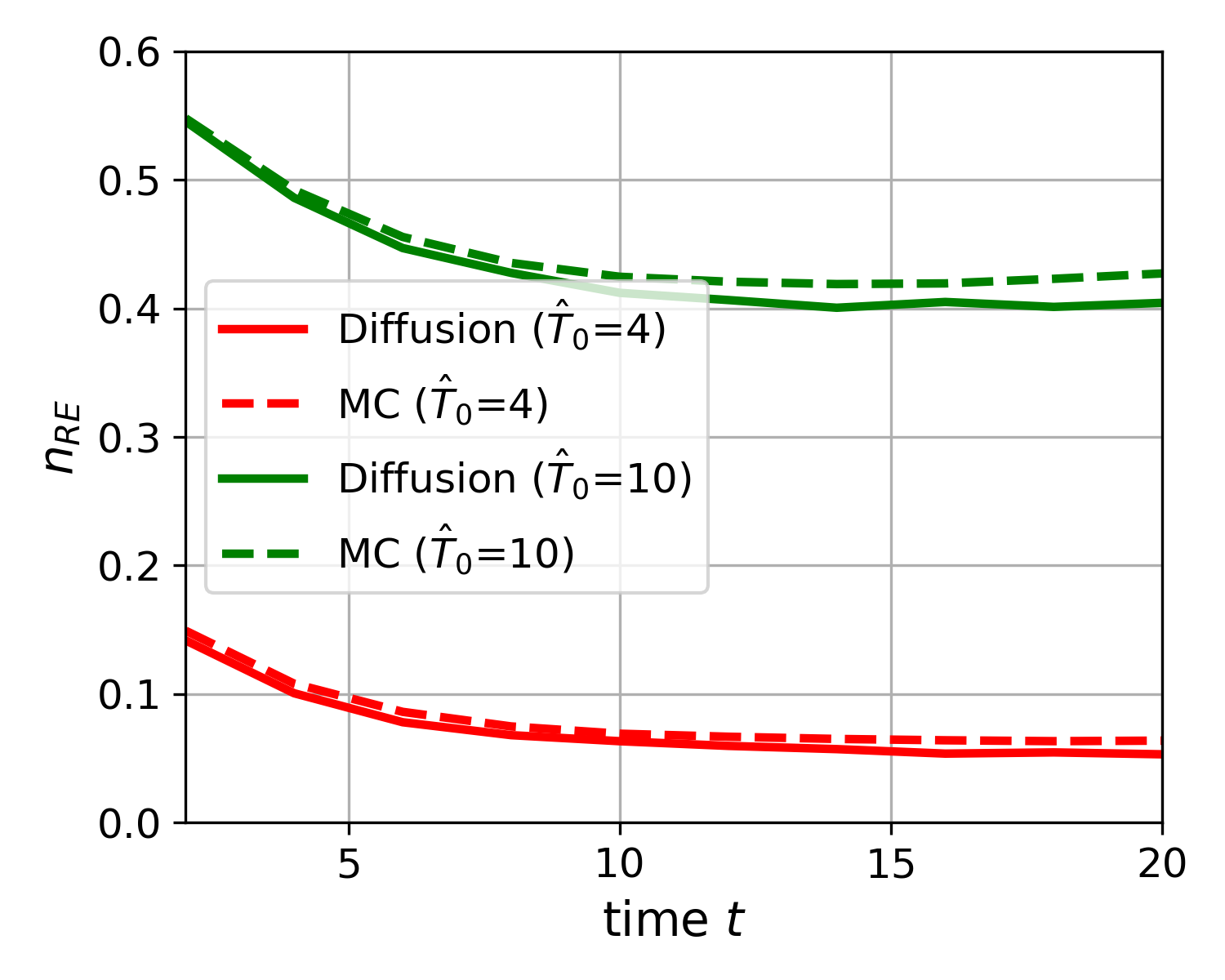}\\
\caption{Comparison of runaway electron production $n_{\rm RE}$ between the proposed diffusion model with escape prediction (solid lines) and Monte Carlo simulations (dashed lines). Left: $n_{\rm RE}$ at $t = 5$ and $t = 20$ for varying initial temperatures $\hat{T}_0$. Right: time evolution of $n_{\rm RE}$ for two representative cases, $\hat{T}_0 = 4$ and $\hat{T}_0 = 10$. The proposed method shows excellent agreement with MC results while enabling efficient prediction across different initial conditions.}
    \label{fig:3d_PR}
\end{figure}

Table~\ref{table:time} presents a computational performance comparison between the hybrid diffusion framework and traditional Monte Carlo methods. The hybrid diffusion approach requires an initial training cost but exhibits significantly different scaling behavior during evaluation. While Monte Carlo runtime increases proportionally with sample size following $O(N)$ scaling, the hybrid diffusion framework maintains nearly constant evaluation time across different sample sizes, achieving approximately 700× speedup at 200K samples. This constant-time scaling results from neural network architecture enabling parallel GPU processing, making the approach particularly advantageous for parameter space exploration and uncertainty quantification applications.

\begin{table}[!ht]
\centering
{\fontsize{8}{9.5}\selectfont
\renewcommand{\arraystretch}{1.5} 
\begin{tabular}{lccc|ccc}
\toprule
\multirow{2}{*}{\textbf{Models}} & \multicolumn{3}{c}{\textbf{Training Phase}} & \multicolumn{3}{c}{\textbf{Evaluation Phase}} \\
\cmidrule(lr){2-4} \cmidrule(lr){5-7}
 & Data labeling & Training $G(\cdot)$ & BC model & 50K samples & 100K samples & 200K samples \\
\midrule
Hybrid Diffusion & 344.13 (secs)  & 14.55 (secs)  &  39.15 (secs) & 39.05e-02 (secs) &  38.89e-02 (secs) &  40.46e-02 (secs)\\
Monte Carlo  &  \text{N/A} &  \text{N/A} &  \text{N/A} &  68.13 (secs) & 142.92 (secs)  &   273.94 (secs) \\
\bottomrule
\end{tabular}}
\caption{Computational performance comparison between the Hybrid Diffusion framework and Monte Carlo methods. Training times for the Hybrid Diffusion model represent a one-time cost using 50K training samples, while evaluation times show the runtime required to generate different numbers of trajectories up to the terminal time $T_{\text{max}} = 30$. Monte Carlo simulations were conducted with a step size of 5e-03.}
\label{table:time}
\end{table}



















\section{Conclusion}
\label{sec:conclusion}
This work introduces a unified hybrid data-driven framework for learning stochastic flow maps of stochastic differential equations in bounded domains, where particles can exit the computational region. The key innovation lies in addressing the fundamental challenge of particle escape through decomposing the complex bounded domain problem into two manageable components: an escape prediction neural network that learns exit probability functions for boundary phenomena, and a training-free conditional diffusion model for interior dynamics.

Our methodology offers several distinct advantages for bounded domain SDE problems that existing approaches cannot adequately handle. The escape prediction component provides a principled probabilistic approach to modeling first exit times, with rigorous convergence analysis demonstrating that the network output converges to the true exit probability. By separating exit detection from state propagation, we enable specialized handling of boundary conditions while preserving physical accuracy and reducing computational cost. The conditional diffusion model for interior dynamics eliminates the computational overhead associated with neural network-based score function learning by using a training-free approach with closed-form exact score functions.

We demonstrate the framework's effectiveness through comprehensive validation across three test cases of increasing complexity. The one-dimensional analytical validation confirms theoretical convergence properties and accuracy against exact solutions. The two-dimensional stochastic transport problem validates the framework's capability in handling mixed boundary conditions and complex flow structures with chaotic advection. The three-dimensional runaway electron application in tokamak plasmas extends traditional 2D momentum-pitch modeling to include radial transport, demonstrating the framework's practical utility for high-impact physics problems. The surrogate model accurately reconstructs complete particle distribution functions without repeated numerical integration and shows strong agreement with high-fidelity Monte Carlo simulations across multiple metrics including escape probability prediction, phase-space distribution reconstruction, and runaway electron generation under varying initial conditions. The framework achieves approximately 700$\times$ computational speedup while maintaining high accuracy and demonstrates robust generalization beyond training windows.

Future work will focus on extending this framework to simulation datasets generated by plasma kinetic transport codes such as full-orbit KORC \cite{carbajal2017space} and guiding center TAPAS \cite{zarzoso2022transport}, enabling high-fidelity 5D plasma transport modeling. Additionally, we will investigate multiscale stochastic dynamics in bounded domains, where multiple time and length scales present significant challenges for both exit probability prediction and interior dynamics modeling. We also plan to extend the methodology to bounded domain SDEs with different transport mechanisms, such as jump processes and anomalous diffusion, which exhibit fundamentally different stochastic behaviors and require specialized handling of non-Gaussian noise and memory effects.

 \section*{Acknowledgement}

This material is based upon work supported by the U.S. Department of Energy, Office of Science, Office of Advanced Scientific Computing Research, Applied Mathematics program under the contract ERKJ443, Office of Fusion Energy Science, and Scientific Discovery through Advanced Computing (SciDAC) program, at the Oak Ridge National Laboratory, which is operated by UT-Battelle, LLC, for the U.S. Department of Energy under Contract DE-AC05-00OR22725. DdCN acknowledges support from the U.S. Department of Energy under Contract No. DE-FG02-04ER-54742.

\bibliographystyle{abbrvnat}  
\bibliography{refs}
\end{document}